\documentclass[letterpaper,journal,twoside]{IEEEtran} 
\usepackage{ifthen} 
\def\useieeetemplate{1}
\def\arxivversion{1}

\usepackage{amsmath,amsfonts}
\usepackage{cite}
\usepackage{threeparttable}     
\usepackage[caption=false,font=normalsize,labelfont=sf,textfont=sf]{subfig}
\usepackage{textcomp}           
\usepackage[utf8]{inputenc}     
\usepackage[T1]{fontenc}        
\usepackage{graphicx} 
\usepackage{url}
\usepackage{hyperref}

\usepackage{multirow}
\usepackage{algorithmic}
\usepackage{algorithm}
\usepackage{pgfplots}           
\pgfplotsset{compat=1.18}       
\pgfplotsset{
    ybar legend/.style={
        /pgfplots/legend image code/.code={
            \fill[##1] (0cm,0.6em) rectangle (\pgfplotbarwidth,-0.3em);
}, },
}
\usepackage{pgfplotstable}      
\usepackage{tikz}
\usetikzlibrary{arrows.meta,positioning}
\newcommand{\flowedge}[5]{
    \draw [myarrow, #4] (#1) -- node[fill=white, pos=0.45, inner sep=1pt, #5] {#3}  (#2);
}
\newcommand{\shadeedge}[2]{
    \draw [line width=5 pt, green!60!black, opacity=0.2] (#1) -- (#2);
}
\newcommand{\shadebentedge}[2]{
    \draw [line width=5 pt, green!60!black, opacity=0.2] (#1) to[bend left=10] node {} (#2);
}
\newcommand{\doubleflowedge}[7]{
    \draw [myarrow, #5] (#1) to[bend left=10] node[fill=white, pos=0.45, inner sep=1pt, #7] {#3} (#2);
    \draw [myarrow, #6] (#2) to[bend left=10] node[fill=white, pos=0.45, inner sep=1pt, #7] {#4} (#1);
}

\newtheorem{lemma}{Lemma}

\title{Fast and Compact Graph Cuts for the Boykov-Kolmogorov Algorithm}

\ifthenelse{\useieeetemplate=1}{
\ifthenelse{\arxivversion=1}{
\usepackage{orcidlink}

\author{Christian~M.~Mikkelstrup\,\orcidlink{0009-0008-4765-151X}, Anders~B.~Dahl\,\orcidlink{0000-0002-0068-8170}, Philip~Bille\,\orcidlink{0000-0002-1120-5154}, Vedrana~A.~Dahl\,\orcidlink{0000-0001-6734-5570}, and Inge~Li~Gørtz\,\orcidlink{0000-0002-8322-4952}
\thanks{This work was supported by Danish Data Science Academy, which is funded by the Novo Nordisk Foundation (NNF21SA0069429) and VILLUM FONDEN (40516). \textit{(Corresponding author: Christian M. Mikkelstrup).}}%
\thanks{The authors are with DTU Compute, Technical University of Denmark, 2800 Kongens Lyngby, Denmark (e-mail: \{cmomi; abda; phbi; vand; inge\}@dtu.dk).}
\thanks{This work has been submitted to the IEEE for possible publication. Copyright may be transferred without notice, after which this version may no longer be accessible.}}

\markboth{Mikkelstrup \MakeLowercase{\textit{et al.}}: Fast and Compact Graph Cuts for the Boykov-Kolmogorov Algorithm}%
{Mikkelstrup \MakeLowercase{\textit{et al.}}: Fast and Compact Graph Cuts for the Boykov-Kolmogorov Algorithm}
\IEEEpubid{}
}{
\author{Christian~M.~Mikkelstrup, Anders~B.~Dahl,~\IEEEmembership{Member,~IEEE,
} Philip~Bille, Vedrana~A.~Dahl,~\IEEEmembership{Member,~IEEE,
} and Inge~Li~Gørtz
\thanks{Manuscript received May 7, 2026; revised August 16, 2026. This work was supported by Danish Data Science Academy, which is funded by the Novo Nordisk Foundation (NNF21SA0069429) and VILLUM FONDEN (40516). \textit{(Corresponding author: Christian M. Mikkelstrup).}}%
\thanks{The authors are with DTU Compute, Technical University of Denmark, 2800 Kongens Lyngby, Denmark (e-mail: \{cmomi; abda; phbi; vand; inge\}@dtu.dk).}
\thanks{Digital Object Identifier no 00.0000/TPAMI.2026.0000000}}

\markboth{IEEE Transactions on Pattern Analysis and Machine Intelligence,~Vol.~48, No.~8, August~2026}%
{Mikkelstrup \MakeLowercase{\textit{et al.}}: Fast and Compact Graph Cuts for the Boykov-Kolmogorov Algorithm}
\IEEEpubid{0000--0000/00\$00.00~\copyright~2026 IEEE}
}
}{
\author{Christian Møller Mikkelstrup \\ \texttt{cmomi@dtu.dk} \and Anders B. Dahl \\ \texttt{abda@dtu.dk} \and Philip Bille \\ \texttt{phbi@dtu.dk} \and Vedrana A. Dahl \\ \texttt{vand@dtu.dk} \and Inge Li Gørtz \\ \texttt{inge@dtu.dk}}
}

\date{}

\begin{document}
\maketitle

\ifthenelse{\useieeetemplate=1}{
\begin{abstract}
Computing a minimum $s$-$t$ cut in a graph is a solution to a wide range of computer vision problems, and is often done using the Boykov-Kolmogorov (BK) algorithm. In this paper, we revisit the BK algorithm from both a theoretical and practical point of view. We improve the analysis of the time complexity of the BK algorithm to $O(mn|C|)$ and propose a new algorithm, the fast and compact BK (fcBK) algorithm, with a time complexity of $O(m|C|)$, where $m$, $n$, and $|C|$ are the number of edges, number of vertices, and the capacity of the cut, respectively. We additionally propose a compact graph representation that allows our implementation to find a minimum $s$-$t$ cut in a graph with upwards of $10^9$ vertices and $10^{10}$ edges on a machine with 128~GB of memory. We find our implementation of the BK algorithm to be the fastest available implementation of the BK algorithm when evaluating on a comprehensive set of benchmark datasets, highlighting the importance of memory-efficient implementations. We make our implementations publicly available for further research and implementation development within minimum $s$-$t$ cut algorithms. 
\end{abstract}
\begin{IEEEkeywords}
Algorithm analysis, benchmarking, Boykov-Kolmogorov algorithm, computer vision, graph algorithms, maximum s-t flow, memory-efficient implementations, minimum s-t cut.
\end{IEEEkeywords}
}{}

\section{Introduction}
\ifthenelse{\useieeetemplate=1}{\IEEEPARstart{G}{iven}}{Given} a weighted, directed graph $G = (V,E)$ and two \emph{terminal vertices} $s$ and $t$, an \emph{$s$-$t$ cut} is a partition of $V$ into sets $A$ and $B$ containing $s$ and $t$, respectively, and the \emph{capacity} of the cut is the sum of weights of the edges from $A$ to $B$. The \emph{minimum $s$-$t$ cut problem} is to compute an $s$-$t$ cut of minimum capacity.

In this paper, we revisit the well-known \emph{Boykov-Kolmogorov} (BK) algorithm \cite{boykov2004a} for computing minimum $s$-$t$ cuts in graphs. The BK algorithm is widely used in computer vision applications, including image segmentation \cite{boykov2001a, soc2005a, boykov2006a, delong2012a}, surface detection \cite{li2004a, tran2006a, li2006a}, multi-view reconstruction \cite{kolmogorov2002multi, labatut2007a}, and stereo matching \cite{kolmogorov2014kolmogorov}. The BK algorithm is also used in complex vision pipelines, such as in prediction steps~\cite{silvoster2025a, li2019a, borovec2017a}, in large frameworks~\cite{cristinelli2026a, zhang2019a, zhou2023a, barath2018a, jain2024a, devi2023a}, and in combination with deep learning~\cite{tang2025a, xie2025a, quo2018a}.

The BK algorithm alternates three stages called \emph{growth}, \emph{augmentation}, and \emph{adoption} to first compute the \emph{maximum $s$-$t$ flow} in the graph. During the growth stage, the algorithm adds edges to trees rooted at $s$ and $t$ until it discovers an edge connecting the two trees. This edge is then used by the augmentation stage for traversing an \emph{augmenting path} from $s$ to $t$, along which it can push flow. This push might remove edges from the trees, creating trees rooted at non-terminal vertices, called \emph{orphans}. The adoption stage reattaches orphans to vertices that are still connected to the terminals. This restores the trees (rooted at $s$ and $t$) for the next growth stage. Once no more paths can be found, the algorithm has a maximum $s$-$t$ flow and the two trees induce a minimum $s$-$t$ cut. The time complexity of the BK algorithm is $O(mn^2|C|)$, where $n$ is the number of vertices, $m$ is the number of edges, and $|C|$ is the capacity of the minimum $s$-$t$ cut in the graph.

The BK algorithm has a list of heuristics designed to improve the experimental performance of the algorithm when applied to graphs from computer vision. For these graph instances, the algorithm often runs much faster in practice than its worst-case bound suggests. In addition to the original implementation, the BK algorithm has been implemented as variants with reduced memory footprint \cite{goldberg2011a, goldberg2015a, jensen2023a} and improved performance on specific classes of graphs \cite{li2004a, jamriska2012a}.

In this paper, we consider the BK algorithm \cite{boykov2004a} and its variants \cite{jensen2023a} from both a theoretical and practical point of view. We focus on the time and memory required for graphs from computer vision and propose several improvements. Our contributions are:
\ifthenelse{\useieeetemplate=1}{\IEEEpubidadjcol}{}
\begin{itemize}
    \item We improve the analysis of the time complexity of the BK algorithm to $O(mn|C|)$. Compared to the existing theoretical analysis, this improves the running time by a factor $n$. We also show that a $O(1)$ upper bound on the weight of each non-terminal edge (similarly for terminal edges) further improves the BK algorithm time complexity to $O(m^2n)$. This upper bound on the edge weight is often realized in computer vision.
    \item We introduce the \emph{fast and compact Boykov-Kolmogorov} (fcBK) algorithm, which is our variant of the BK algorithm. By flagging encountered vertices in orphan-rooted trees, we remove repeated work in the adoption stage. The repeated work in the BK algorithm comes from multiple vertices using the same tree path to check whether they are connected to a terminal. The fcBK algorithm has a time complexity of $O(m|C|)$, improving over the BK algorithm by a factor $n$. As for the BK algorithm, this is further improved by a $O(1)$ upper bound on the weight of each non-terminal edge (similarly for terminal edges), yielding a $O(m^2)$ time complexity for the fcBK algorithm.
    \item We propose a compact graph representation that significantly improves the memory footprint when using any variant of BK to find the minimum $s$-$t$ cut. The representation packs the graph vertices and edges interleaved, significantly reducing the need for references from vertices to edges. By merging edges with the same source and target, we avoid duplicate edge information, which in some cases is responsible for the majority of the memory used. Finally, by letting edges store the index offset relative to the current vertex, we can exploit the local structure of graphs from computer vision to reduce the bits needed for referencing vertices in datasets that otherwise require larger data types.
    \item We create efficient \texttt{C++} implementations of the fcBK and BK algorithms, both using our compact graph representation. We further exploit the algorithm and vertex neighborhood structures, allowing multiple values to occupy the same memory locations for an even smaller memory footprint of the graph. We make our implementation publicly available at \url{https://fcbk-maxflow.github.io/}.
    \item We experimentally evaluate our implementations on a comprehensive set of 61 benchmark datasets. The experiments show that, on average, our implementations use less than half of the memory and run faster than existing implementations of BK \cite{boykov2004a, jensen2023a}. Compared to the original implementation, both our implementations have a shorter running time to find the maximum $s$-$t$ flow on every dataset while using between $3.5\%$ and $51\%$ of the memory. Despite a larger worst-case running time bound compared to the fcBK algorithm, our implementation of the BK algorithm is the fastest in practice for the considered datasets.
    \item We evaluate our fcBK implementation on two examples of large surface detection problems. By utilizing the compact graph representation, we find the minimum $s$-$t$ cut in graphs with upwards of $10^9$ vertices and $10^{10}$ edges on a machine with $128$~GB of memory. These examples highlight that our compact graph representation is the preferred option under memory constraints for the BK algorithm when solving large minimum $s$-$t$ cuts in computer vision problems.
\end{itemize}

\subsection{Related Work}

Approaches that reduce the memory usage of minimum $s$-$t$ cut algorithms may target specific use cases and exploit their induced structure. Examples include grid graphs \cite{jamriska2012a}, surface detection \cite{li2004a}, and planar graphs \cite{schmidt2009a}. However, such implementations cannot be used for solving graph instances that deviate from the assumed structure (e.g.,\ \cite{jeppesen2020a, liu2015a, rother2007a}).

Several improved theoretical bounds for the $s$-$t$ minimum cut have been shown. For instance, algorithms using augmenting paths \cite{goldberg2011a}, pseudoflows \cite{hochbaum2008a, hochbaum2013a, goldberg2015a}, and variants of the push-relabel algorithm \cite{goldberg1988a, cherkassky1997a, arora2010a, goldberg2008a, goldberg2009a}. Many of these have efficient practical implementations, but are not competitive for large-scale computer vision problem instances. Other related lines of research include approximate algorithms~\cite{lerm2010a, isack2017a, jeppesen2020a} and parallel algorithms~\cite{liu2009a, liu2010a, ekstroem2020a, bader2005a, delong2008a}.    

\subsection{Section Overview}

Section~\ref{sec:prelim} covers preliminaries, including the BK algorithm. Section~\ref{sec:bk_runtime} presents our improved time‑complexity analysis of the BK algorithm. Sections~\ref{sec:compact_graph} and \ref{sec:fcbk} introduce our compact graph representation and our fcBK algorithm. Sections~\ref{sec:benchmark} and \ref{sec:largevolumes} report experiments, and Section~\ref{sec:discussion_conclusion} concludes.

\section{Preliminaries}\label{sec:prelim}
We introduce notation for minimum $s$-$t$ cuts, maximum $s$-$t$ flows (visualized in \figurename~\ref{fig:exGandGf}), and the BK algorithm (visualized in \figurename~\ref{fig:exBK}) to set up our new fcBK algorithm.

\subsection{Minimum \texorpdfstring{$s$-$t$}{s-t} Cut and Maximum \texorpdfstring{$s$-$t$}{s-t} Flow}

In the minimum $s$-$t$ cut problem, we are given a connected, directed, and edge-weighted graph $G=(V,E)$. The vertex set $V$ has two special vertices $s,t$, called \emph{terminals}. The terminal $s\in V$ has only outgoing edges and the terminal $t\in V$ has only incoming edges. Each edge $(u,v)$ in the edge set $E$ is associated with a non-negative value $c(u,v)\geq 0$, called the \emph{capacity of the edge}. A \emph{cut} in $G$ is a partition of $V$ into two sets, denoted $A$ and $B$. We require that $s\in A$ and $t\in B$. A cut $(A,B)$ is associated with a value $c(A,B)$, called the \emph{capacity of the cut}. The capacity of a cut is the sum of the capacity of all edges going from a vertex in $A$ to a vertex in $B$. The minimum $s$-$t$ cut problem is finding a cut $(A,B)$ with minimum capacity $c(A,B)=|C|$. 

In the maximum $s$-$t$ flow problem, each edge $(u,v)\in E$ is additionally associated a value, $f(u,v)$, called the \emph{flow of the edge}. The \emph{flow of a graph} is an assignment of flow to all edges in $E$, denoted by $f$. We require that the flow of a graph satisfies the capacity and conservation constraints. The \emph{capacity constraint} requires that every edge $(u,v)\in E$ satisfy $0\leq f(u,v)\leq c(u,v)$. The \emph{conservation constraint} requires that every non-terminal vertex $u\in V\backslash\{s,t\}$ satisfy $\sum_{(u,v)\in E}f(u,v)=\sum_{(v,u)\in E}f(v,u)$. The \emph{value of a flow} $f$ is the sum of the flow leaving $s$ (or equivalently the flow entering $t$) and is denoted by $|f|$. 

Most algorithms for computing the minimum $s$-$t$ cut start by finding the maximum $s$-$t$ flow in the graph.

A graph $G$ and a flow $f$ is represented using a directed and weighted \emph{residual graph}, denoted $G_f$. The residual graph $G_f$ contains the same vertex set as $G$ and uses \emph{residual edges}, denoted $E_f$. For now, we assume that any two vertices in $G$ have no more than one edge between them. For every edge in the original graph $(u,v)\in E$ with capacity $c(u,v)$ and flow $f(u,v)$, the residual graph has a pair of residual edges $(u,v)\in E_f$ and $(v,u)\in E_f$. From this pair, the residual edge $(u,v)\in E_f$ (resp. $(v,u)\in E_f$) is associated with a non-negative value $c_f(u,v)=c(u,v)-f(u,v)\geq 0$ (resp. $c_f(v,u)=f(u,v)\geq 0$), called the \emph{residual capacity of the edge}. For any such a pair of residual edges $e, e'\in E_f$, we say that $e'$ is the \emph{mirror} of $e$, and vice versa. We say that a residual edge $(u,v)\in E_f$ is \emph{saturated} if and only if $c_f(u,v)=0$. A residual edge can be specified as a \emph{terminal edge} if it is incident with a terminal and an \emph{internal edge} otherwise. If there are multiple edges between two vertices $u,v\in G$, there are different representations of the residual graph $G_f$. Possible representations include a unique pair of residual edges for each edge between $u$ and $v$, and a single pair of residual edges $(u,v),(v,u)\in E_f$.

An \emph{augmenting path} is a simple path in $G_f$ from $s$ to $t$ that use non-saturated residual edges. The \emph{bottleneck capacity} of a path, denoted by $\Delta$, is the smallest residual capacity of a residual edge on the path. Given an augmenting path and the corresponding bottleneck capacity $\Delta$, a legal operation is to decrease the residual capacity for edges in the augmenting path by $\Delta$ and increase the residual capacity for their mirror edges by $\Delta$. We say that this operation \emph{pushes} $\Delta$ flow from $s$ to $t$. Pushing $\Delta$ flow along an augmenting path will increase the value of the flow $|f|$ by $\Delta$ and saturate at least one residual edge along the path. 

\begin{figure}[!t]
    \centering
    \subfloat[]{
        \noindent\resizebox{0.23\textwidth}{!}{\begin{tikzpicture}[
        stdnode/.style={
            circle,
            draw=black,
            fill=gray,
            fill opacity = 0.3,
            text opacity=1,
            inner sep=0pt,
            minimum size=20pt,
            font=\large 
        },
        myarrow/.style={
            -Stealth,
            font=\footnotesize, 
            line width=1pt
        },
        node distance=0.6cm and 1.2cm,
    ]
    \useasboundingbox (-0.75,-1.625) rectangle (6, 1.625);
    
    \node[stdnode] (s) {$s$};
    \node[stdnode, above right=of s] (v1) {$a$};
    \node[stdnode, below right=of s] (v2) {$c$};
    \node[stdnode, right=of v1] (v3) {$b$};
    \node[stdnode, right=of v2] (v4) {$d$};
    \node[stdnode, below right=of v3] (t) {$t$};

    \flowedge{s}{v1}{5/5}{black}{sloped}
    \shadeedge{s}{v2}
    \flowedge{s}{v2}{1/5}{black}{sloped}
    \flowedge{v2}{v1}{2/4}{black}{}
    \flowedge{v1}{v3}{7/7}{black}{}
    \doubleflowedge{v2}{v4}{2/2}{0/3}{black}{black}{}
    \shadeedge{v3}{v2}
    \flowedge{v3}{v2}{3/5}{black}{sloped}
    \flowedge{v4}{v3}{1/4}{black}{}
    \shadeedge{v3}{t}
    \flowedge{v3}{t}{5/9}{black}{sloped}
    \flowedge{v4}{t}{1/2}{black}{sloped}

    \foreach \i/\x/\y in {0/-1.2/0,1/2.2/2,2/2.2/-2,3/6.4/0,4/3.1/2,5/3.1/-2}
        \coordinate (c\i) at (\x,\y) {};
    \draw[dashed, rounded corners=30, black, fill=gray, opacity=0.1] (c0)--(c1)--(c2)--cycle;
    \draw[dashed, rounded corners=30, black, fill=gray, opacity=0.1] (c3)--(c4)--(c5)--cycle;
    \node[gray] (labelA) at (0.2,1.3) {$A$};
    \node[gray] (labelB) at (4.8,1.3) {$B$};

\end{tikzpicture}}
    \label{fig:exG}} %
    \subfloat[]{
        \noindent\resizebox{0.23\textwidth}{!}{\begin{tikzpicture}[
        stdnode/.style={
            circle,
            draw=black,
            fill=gray,
            fill opacity = 0.3,
            text opacity=1,
            inner sep=0pt,
            minimum size=20pt,
            font=\large 
        },
        myarrow/.style={
            -Stealth,
            font=\footnotesize, 
            line width=1pt
        },
        node distance=0.6cm and 1.2cm,
    ]
    \useasboundingbox (-0.75,-1.625) rectangle (6, 1.625);%
    
    \node[stdnode] (s) {$s$};
    \node[stdnode, above right=of s] (v1) {$a$};
    \node[stdnode, below right=of s] (v2) {$c$};
    \node[stdnode, right=of v1] (v3) {$b$};
    \node[stdnode, right=of v2] (v4) {$d$};
    \node[stdnode, below right=of v3] (t) {$t$};

    \doubleflowedge{s}{v1}{0}{5}{black, dotted}{black}{sloped}
    \shadebentedge{s}{v2}
    \doubleflowedge{s}{v2}{4}{1}{black}{black}{sloped}
    \doubleflowedge{v2}{v1}{2}{2}{black}{black}{}
    \doubleflowedge{v3}{v1}{7}{0}{black}{black, dotted}{}
    \doubleflowedge{v4}{v2}{5}{0}{black}{black,dotted}{}
    \shadebentedge{v2}{v3}
    \doubleflowedge{v3}{v2}{2}{3}{black}{black}{sloped}
    \doubleflowedge{v4}{v3}{3}{1}{black}{black}{}
    \shadebentedge{v3}{t}
    \doubleflowedge{v3}{t}{4}{5}{black}{black}{sloped}
    \doubleflowedge{v4}{t}{1}{1}{black}{black}{sloped}
    
\end{tikzpicture}}
    \label{fig:exGf}} %
    \caption{Example describing the notation. (a) A graph $G$ and a flow $f$ with $|f|=6$. Each edge $(u,v)\in E$ is labeled with $f(u,v)/c(u,v)$. A cut $(A,B)$ is shown with value $c(A,B)=9$. 
    (b) The residual graph $G_f$ corresponding to $G$ and $f$ from (a). Each edge $(u,v)\in E_f$ is labeled with $c_f(u,v)$. Saturated residual edges are dashed. A single pair of residual edges between $c$ and $d$ represents the corresponding two edges in $G$. 
    Both (a) and (b) have the edges corresponding to an augmenting path in $G_f$ highlighted in green. The bottleneck capacity of this path is $\Delta=3$. If we push $\Delta=3$ flow along this path, we get $|f|=|C|$.}
    \label{fig:exGandGf}
\end{figure}

An \emph{augmenting path algorithm} is an algorithm for finding the maximum $s$-$t$ flow in a graph $G$, that operates on the residual graph $G_f$ with an initially empty flow $f$. It repeatedly finds augmenting paths from $s$ to $t$ and pushes the bottleneck capacity along them. The algorithm terminates once no augmenting paths exist. At this point, the algorithm has a maximum value flow of $|f|=|C|$. We can then find a minimum cut $(A,B)$ by defining $A$ as the vertices reachable in $G_f$ from $s$ via non-saturated residual edges, and $B$ the remaining vertices. The differences between maximum $s$-$t$ flow algorithms that use augmenting paths lie in how they find the augmenting paths. When using an augmenting path algorithm to find the minimum $s$-$t$ cut, the cost is dominated by computing the maximum $s$-$t$ flow. Because all algorithms for finding a minimum $s$-$t$ cut return a cut of the same capacity, we evaluate the performance of an algorithm by the time and memory it requires.

A \emph{directed tree} is a directed graph where the underlying undirected graph is connected and without cycles. An \emph{out-tree} is a directed tree with edges oriented away from the root. An \emph{in-tree} is a directed tree with edges oriented towards the root. We use \emph{tree} to refer to an in-tree or an out-tree. An \emph{out-forest} is a collection of out-trees. An \emph{in-forest} is a collection of in-trees. Consider a pair of neighboring vertices in a tree. The vertex closest to the root is called the \emph{parent} and the vertex furthest from the root is called the \emph{child}.

\subsection{The Boykov-Kolmogorov Algorithm}\label{sec:bk}

The BK algorithm is an augmenting path algorithm for finding the maximum $s$-$t$ flow. It finds augmenting paths in the residual graph $G_f=(V,E_f)$ from the terminal $s$ to the terminal $t$. It maintains an out-forest called $S$ and an in-forest called $T$. All edges in $S$ and $T$ are non-saturated residual edges. The terminal $s$ is the root of a tree in $S$ and the terminal $t$ is the root of a tree in $T$. If a vertex different from $s$ or $t$ is the root of a tree in $S\cup T$, that vertex is called an \emph{orphan}. A vertex $v\in V$ is called a \emph{tree vertex} if $v\in S\cup T$ and a \emph{free vertex} otherwise. The BK algorithm starts with $s$ and $t$ as tree vertices and all other vertices being free. The algorithm loops through three stages, called \emph{growth}, \emph{augmentation}, and \emph{adoption}. The notation and intuition for the stages are described here, and the pseudocode is detailed in Algorithms \ref{alg:bkgrowth}, \ref{alg:bkaugment}, and \ref{alg:bkadoption}. See \cite{kolmogorov2004b} for the proof of correctness.

During the growth stage, $S$ and $T$ contain only the trees rooted at $s$ and $t$. The stage maintains a set of vertices that it will process later in the algorithm, called \emph{active} vertices. A \emph{passive} vertex is a tree vertex that is not active. The terminals $s$ and $t$ are initially active. The growth stage goes through each active vertex $v$ and its corresponding non-saturated residual edges (outgoing for $v\in S$ and incoming for $v\in T$ to respect the out-/in-forest orientation) with the goal of growing $S$ and $T$. The growth stage stops in one of two cases. If all tree vertices are passive, the algorithm can no longer grow $S$ or $T$ (no augmenting paths exist) and terminates with a maximum $s$-$t$ flow (and a minimum $s$-$t$ cut by defining $A$ to be the vertices of $S$, and $B$ the remaining vertices). The growth stage also ends if it encounters a non-saturated residual edge from a vertex in $S$ to a vertex in $T$, called a \emph{bridge}. In this case, the augmentation stage begins. 

The augmentation stage finds an augmenting path from $s$ to $t$ that cross the bridge. It then pushes the bottleneck capacity along the path. This push will saturate at least one residual edge in the augmenting path. Any saturated edges in $S$ or $T$ are removed from their forest. This maintains the property that $S$ and $T$ only contain non-saturated residual edges. Removing edges from $S$ and $T$ creates orphans. 

\begin{figure}[!t]
    \centering
    \noindent\resizebox{0.43\textwidth}{!}{\def\thisscale{1.5}
\def\scolor{blue}
\def\tcolor{red}
\def\othercolor{black!35!white}

\begin{tikzpicture}[
        stdnode/.style={
            circle,
            draw=black,
            fill=gray,
            fill opacity = 0.3,
            text opacity=1,
            inner sep=0pt,
            minimum size=20pt,
            font=\large
        },
        myarrow/.style={
            -Stealth,
            font=\tiny,
            line width=1pt
        },
        node distance=0.6cm and 1.2cm,
    ]
    \useasboundingbox (-0.375,-0.375) rectangle (9.375, 3.375);

    \node[stdnode] (s) at (0, \thisscale) {$s$};
    \node[stdnode] (t) at (\thisscale*6,\thisscale) {$t$};

    \foreach \x/\label/\param in {1//,2//,3/$v_1$/,4//,5//}
        \node[stdnode,\param] (v\x_2) at (\thisscale*\x,\thisscale*2) {\label};
    \foreach \x/\label/\param in {1//,2/$v_2$/,3/$o_2$/,4//,5//}
        \node[stdnode,\param] (v\x_1) at (\thisscale*\x,\thisscale*1) {\label};
    \foreach \x/\label/\param in {1/$o_1$/,2//,3//,4/$u_1$/,5/$u_2$/}
        \node[stdnode,\param] (v\x_0) at (\thisscale*\x,\thisscale*0) {\label};

    \doubleflowedge{s}{v1_1}{}{}{\scolor}{\othercolor}{opacity=0}
    \shadebentedge{s}{v1_0}
    \doubleflowedge{s}{v1_0}{}{}{\othercolor, dotted}{\othercolor}{opacity=0}
    \doubleflowedge{s}{v1_2}{}{}{\othercolor, dotted}{\othercolor}{opacity=0}
    
    \doubleflowedge{v1_1}{v1_2}{}{}{\scolor}{\othercolor}{opacity=0}
    \doubleflowedge{v1_1}{v2_1}{}{}{\othercolor}{\othercolor}{opacity=0}
    \doubleflowedge{v1_0}{v1_1}{}{}{\othercolor}{\othercolor, dotted}{opacity=0}
    \doubleflowedge{v1_2}{v2_2}{}{}{\scolor}{\othercolor}{opacity=0}
    \shadebentedge{v1_0}{v2_0}
    \doubleflowedge{v1_0}{v2_0}{}{}{\scolor}{\othercolor}{opacity=0}
    \doubleflowedge{v2_0}{v2_1}{}{}{\scolor}{\othercolor}{opacity=0}
    \shadebentedge{v2_0}{v3_0}
    \doubleflowedge{v2_0}{v3_0}{}{}{\scolor}{\othercolor}{opacity=0}
    \doubleflowedge{v2_1}{v3_1}{}{}{\othercolor}{\othercolor}{opacity=0}
    \doubleflowedge{v2_1}{v2_2}{}{}{\othercolor}{\othercolor}{opacity=0}
    \doubleflowedge{v2_2}{v3_2}{}{}{\scolor}{\othercolor}{opacity=0}
    \doubleflowedge{v3_0}{v4_0}{}{}{\othercolor}{\othercolor}{opacity=0}
    \shadebentedge{v3_0}{v3_1}
    \doubleflowedge{v3_0}{v3_1}{}{}{\othercolor, dotted}{\othercolor}{opacity=0}
    \shadebentedge{v3_1}{v4_1}
    \doubleflowedge{v3_1}{v3_2}{}{}{\othercolor}{\othercolor}{opacity=0}
    \doubleflowedge{v3_1}{v4_1}{}{}{\scolor}{\othercolor}{opacity=0}
    \doubleflowedge{v3_2}{v4_2}{}{}{\othercolor, dotted}{\othercolor}{opacity=0}
    \doubleflowedge{v4_0}{v4_1}{}{}{\othercolor}{\othercolor}{opacity=0}
    \doubleflowedge{v4_0}{v5_0}{}{}{\othercolor}{\othercolor}{opacity=0}
    \doubleflowedge{v4_1}{v5_1}{$b_2$}{}{\othercolor}{\othercolor}{above=1mm, text=black!50!white, font=\large}
    \doubleflowedge{v4_1}{v5_1}{}{}{\othercolor}{\othercolor}{opacity=0}
    \doubleflowedge{v4_1}{v4_2}{$b_1$}{}{\othercolor, dotted}{\othercolor}{left=1mm, text=black!50!white, font=\large}
    \shadebentedge{v4_1}{v4_2}
    \doubleflowedge{v4_1}{v4_2}{}{}{\othercolor, dotted}{\othercolor}{opacity=0}
    \shadebentedge{v4_2}{v5_2}
    \doubleflowedge{v4_2}{v5_2}{}{}{\tcolor}{\othercolor}{opacity=0}
    \doubleflowedge{v5_0}{v5_1}{}{}{\othercolor}{\othercolor}{opacity=0}
    \doubleflowedge{v5_1}{v5_2}{}{}{\tcolor}{\othercolor}{opacity=0}
    \doubleflowedge{v5_2}{t}{}{}{\tcolor}{\othercolor}{opacity=0}
    \shadebentedge{v5_2}{t}
    \doubleflowedge{v5_1}{t}{}{}{\othercolor, dotted}{\othercolor}{opacity=0}
    \doubleflowedge{v5_0}{t}{}{}{\othercolor}{\othercolor, dotted}{opacity=0}

    
\end{tikzpicture}} %
    \caption{Example state of the BK algorithm after the augmentation stage. Edges of the out-forest $S$ are blue and edges of the in-forest $T$ are red. Edges in the augmenting path used in the recent augmentation stage are highlighted in green. The vertices $u_1, u_2\in V$ are free vertices. The edges $b_1, b_2\in E_f$ are bridges. Saturated residual edges are dotted. The push saturates the bridge $b_1$ and the highlighted edges to $o_1$ and $o_2$ (causing $o_1$ and $o_2$ to be orphans). The orphan $o_2$ has two potential parents $v_1,v_2\in S$. Of these, only $v_1$ is connected to the terminal and can be used to reconnect $o_2$ to $s$.}
    \label{fig:exBK}
\end{figure}

The adoption stage processes all orphans. Each orphan $o$ is either reconnected to a tree rooted at a terminal or made free. If $o\in S$, it must be reconnected to a vertex in $S$ with a non-saturated residual edge to $o$. If $o\in T$, the orphan must be reconnected with a vertex in $T$ to which it has a non-saturated residual edge. We call such a vertex a \emph{potential parent}. A potential parent can reconnect an orphan only if the root of its tree is a terminal. The \texttt{findroot} subroutine follows parent edges from a potential parent to determine the root of its tree. If an orphan does not have a potential parent connected to a terminal, it is made free, its potential parents are made active, and its children are made orphans. At the end of the adoption stage, $S$ is an out-tree rooted at $s$, $T$ is an in-tree rooted at $t$, and the growth stage begins again.

\begin{algorithm}[H]
    \caption{BK stage: growth}
    \label{alg:bkgrowth}
    \begin{algorithmic}
        \STATE \textbf{while} there are active vertices in $S\cup T$ \textbf{do}
        \STATE \hspace{1em} Select an active $v\in S\cup T$
        \STATE \hspace{1em} \textbf{for} each neighbor $u$ of $v$ \textbf{do}
        \STATE \hspace{2em} Let $e$ be $(v,u)$ if $v\in S$ or $(u,v)$ if $v\in T$
        \STATE \hspace{2em} \textbf{if} $e$ is not saturated \textbf{then}
        \STATE \hspace{3em} \textbf{if} $u$ is free \textbf{then}
        \STATE \hspace{4em} Add $u$ to the tree of $v$ using $e$
        \STATE \hspace{4em} Make $u$ active
        \STATE \hspace{3em} \textbf{else if} $e$ is a bridge \textbf{then}
        \STATE \hspace{4em} Initiate the augmentation stage via $e$
        \STATE \hspace{3em} \textbf{end if}
        \STATE \hspace{2em} \textbf{end if}
        \STATE \hspace{1em} \textbf{end for}
        \STATE \hspace{1em} Make $v$ passive
        \STATE \textbf{end while}
        \STATE Terminate with a maximum $s$-$t$ flow and a minimum $s$-$t$ cut
    \end{algorithmic}
\end{algorithm}
\begin{algorithm}[H]
    \caption{BK stage: augmentation}
    \label{alg:bkaugment}
    \begin{algorithmic}
        \STATE \textbf{Input:} A bridge $e$
        \STATE Find the augmenting path induced by parent edges from $e$
        \STATE Find the bottleneck capacity $\Delta$ of the path
        \STATE Push $\Delta$ flow along the path and find orphans
        \STATE Initiate the adoption stage with the orphans
    \end{algorithmic}
\end{algorithm}
\begin{algorithm}[H]
    \caption{BK stage: adoption}
    \label{alg:bkadoption}
    \begin{algorithmic}
        \STATE \textbf{while} there are orphans in $S\cup T$ \textbf{do}
        \STATE \hspace{1em} Select an orphan $o\in S\cup T$
        \STATE \hspace{1em} Find a potential parent $u$ of $o$ such that\\\hspace{2em}\texttt{findroot}$(u)$ is a terminal
        \STATE \hspace{1em} \textbf{if} there is such a $u$ \textbf{then}
        \STATE \hspace{2em} Let $u$ be the parent of $o$ 
        \STATE \hspace{1em} \textbf{else}
        \STATE \hspace{2em} Make $o$ free 
        \STATE \hspace{2em} Make children of $o$ orphans
        \STATE \hspace{2em} Make potential parents of $o$ active
        \STATE \hspace{1em} \textbf{end if}
        \STATE \textbf{end while}
    \end{algorithmic}
\end{algorithm}

\subsubsection{Heuristics}

In the BK algorithm, each vertex stores its last known distance from its terminal root and a timestamp. However, if a vertex has its distance and timestamp updated, the algorithm does not traverse further down the tree to propagate this information. As a result, these values are only heuristic. The BK algorithm uses these heuristic values in the adoption stage. We say that a vertex is \emph{marked} if it stores an updated timestamp, which it does if it is confirmed to be connected to a terminal during this adoption stage. The \texttt{findroot} procedure will terminate early if it encounters a marked vertex. Additionally, the algorithm considers all potential parents of an orphan that are connected to a terminal and selects the one with the smallest distance. These heuristic values are also used in the growth stage. If the non-saturated residual edge $e$ has both endpoints in the same tree, the edge can replace an existing parent edge if this shortcuts the tree.

\subsubsection{Implementation}

We denote by \texttt{BK} the implementation (version $3.04$) of the Boykov-Kolmogorov algorithm, the most recent version from \cite{boykov2004a}, available at \url{https://pub.ista.ac.at/~vnk/software.html}. We denote by \texttt{mBK} and \texttt{mBK-r} the implementations by \cite{jensen2023a}, available at \url{https://github.com/patmjen/maxflow_algorithms}.

\texttt{BK} uses pointers to reference vertex and internal residual edge objects. If the datasets are small enough, \texttt{BK} can be compiled with 32-bit pointers instead of 64-bit pointers, which are standard on most modern machines. \texttt{BK} stores the residual edges in the order they are presented to it. Each residual edge additionally stores three pointers: to the head of the residual edge, to the mirror residual edge, and to the next outgoing residual edge from this vertex. \texttt{BK} uses the last of these pointers to iterate outgoing residual edges as a linked list. \texttt{BK} processes the orphans in a queue and enqueues those closest to the terminal first.

Both \texttt{mBK} and \texttt{mBK-r} store vertices and internal residual edges in lists and refer to them using their indices in those lists (instead of pointers). The \texttt{mBK} implementation does not store the index to the mirror in a residual edge, as the mirror is stored adjacent in memory to the residual edge. On the other hand, \texttt{mBK-r} does not store the index to the next residual edge as it uses a temporary data structure to pack outgoing residual edges contiguously in memory. These implementations save the most space over \texttt{BK} for datasets that exceed the address space of 32-bit compiled programs, yet contain sufficiently few elements for 32-bit indices to suffice. Both \texttt{mBK} and \texttt{mBK-r} also use a queue to process the orphans, but has an additional queue for orphans produced by orphans and empty this first.

All three implementations has each vertex of the residual graph store the next active vertex, the first outgoing edge, the parent vertex, the two heuristic values (timestamp and distance), and the corresponding terminal edges. The terminal edges are preprocessed and represented by a single value, the residual capacity of edges from $s$ (or to $t$). The minimum number of bytes required to represent the residual graph in each implementation is shown in Table~\ref{tab:spacetable}.

\section{Running Time of the Boykov-Kolmogorov Algorithm} \label{sec:bk_runtime}

The authors of BK present an upper bound on the running time of $O(mn^2|C|)$ \cite{boykov2004a}. We show a simple analysis that improves this to $O(mn|C|)$ for general graph instances or $O(m^2n)$ for many instances in computer vision, where $n$ is the number of vertices, $m$ is the number of edges, and $|C|$ is the capacity of the minimum $s$-$t$ cut in the graph.

Assuming integer capacities and since each augmentation pushes positive flow, the algorithm loops through the three stages at most $|C|$ times. The growth stage only processes active vertices. As active vertices are made passive after processing, each vertex is processed at most once during a single growth stage. Processing a vertex can explore its (outgoing or incoming) residual edges, each of which takes $O(1)$ work or ends the stage. Therefore, a single growth stage will spend $O(1)$ work per residual edge and terminate after $O(m)$ steps. The augmentation stage performs work proportional to the length of the augmenting path from $s$ to $t$, and thus terminates after $O(n)$ steps. Processing an orphan during the adoption stage will either reconnect it to the terminal or make it a free vertex. In the latter case, new orphans are within that orphan-rooted tree and are therefore not processed yet in this stage. As a result, each vertex is processed as an orphan at most once per adoption stage. During processing, an orphan can explore its (outgoing or incoming) residual edges twice (once to look for a potential parent and once if the orphan is made free), but might call \texttt{findroot} once for each of those edges. \texttt{findroot} traverses parent edges, so each search terminates after $O(n)$ steps. The remaining work of processing an orphan is $O(1)$ per incident edge. Combining these bounds, each adoption stage will finish after $O(mn)$ time. In total, the BK algorithm terminates after $O(mn|C|)$ steps. The bottleneck of the analysis is the $O(mn)$ steps of \texttt{findroot} in each adoption stage. This analysis improves on the $O(mn^2|C|)$ running time bound in \cite{boykov2004a}.

For many graphs coming from problems in computer vision, we additionally have a constant upper bound on the edge capacity (e.g.,\ $100$), either for internal edges or for terminal edges. If the capacities of terminal edges satisfy this bound, we have an upper bound on the number of augmenting paths of $|C|=O(m)$ as cutting all $O(m)$ terminal edges separates $s$ from $t$ (or $O(n)$ assuming that multiple terminal edges to the same vertex are merged). If the capacities of internal edges satisfy this bound, we could have a large $|C|$ due to large bottleneck capacity augmenting paths from $s$ to $t$ using only terminal edges. However, we can push flow along these paths to get a flow $f'$, where each vertex is connected using non-saturated terminal edges to only $s$, only $t$, or to no terminal. It is common to do this when building the residual graph (when loading the terminal edge capacity into the vertex). Thus, after building the graph, all following augmenting paths must use internal edges. At this point, the total remaining flow cannot be larger than the sum of the capacity of all internal edges, which is $O(m)$. As the algorithm will push at least $1$ flow per iteration, it will terminate with a maximum $s$-$t$ flow after $O(m)$ iterations. Therefore, a constant edge capacity bound (on either terminal or internal edges) implies that BK finds at most $O(m)$ augmenting paths after building the graph, resulting in a BK algorithm running time of $O(m^2n)$. Many graph instances from computer vision also have constant vertex degrees for non-terminal vertices, resulting in a $O(n^3)$ running time bound.

\section{Compact Graph Representation}\label{sec:compact_graph}

We present our compact graph representation for the BK algorithm. Given a graph instance $G$, we maintain the initially empty flow $f$ by representing the corresponding residual graph $G_f$. We additionally maintain data structures specific to the BK algorithm. These are: 1) the trees $S$ and $T$; 2) the linked-list of active vertices; and 3) the distance and timestamp heuristics for all vertices. We represent the residual graph and the BK specific data using $8$ bytes for each internal edge and $16$ bytes for each vertex and the incident terminal edges. Similar to \texttt{BK}, \texttt{mBK} and \texttt{mBK-r}, we represent only the outgoing residual edges from $s$ and the incoming residual edges to $t$. The distribution of bytes to values and how it compares to existing approaches is found in Table~\ref{tab:spacetable}. Many of the techniques presented here are applicable to other maximum $s$-$t$ flow algorithms, particularly ones using augmenting paths.

\subsection{Data Structure}

\begin{figure}[!t]
\renewcommand{\arraystretch}{1.3}
\setlength{\tabcolsep}{5pt}
\centering
\begin{tabular}{|c|cc|c|c|ccc|c|c|}
\hline
$i$   & \multicolumn{2}{c|}{0}  & 4       & 6  & \multicolumn{2}{c|}{8}   & 12  & 14  & 16  \\ \hline
$G_f$ & \multicolumn{2}{c|}{$a$} & $(a,b)$ & $(a,c)$ & \multicolumn{2}{c|}{$b$}            &      $(b,c)$     & $(b,a)$ & $(b,d)$ \\ \hline\hline
$i$   & \multicolumn{2}{c|}{18}  & 22      & 24 & \multicolumn{1}{c|}{26} & \multicolumn{2}{c|}{28} & 32 & 34 \\ \hline
$G_f$ & \multicolumn{2}{c|}{$c$} & $(c,b)$ & $(c,d)$ & \multicolumn{1}{c|}{$(c,a)$} & \multicolumn{2}{c|}{$d$} & $(d,c)$ & $(d,b)$ \\ \hline
\end{tabular}
\caption{Example layout of the interleaved list in our compact representation corresponding to the residual graph $G_f$ in \figurename~\ref{fig:exGf}. The information from Table~\ref{tab:spacetable} is saved in the locations corresponding to the labels of the vertices and residual edges. The value $i$ corresponds to the first index for the vertex or residual edge in our representation, in which each vertex takes up 4 integers of 32-bits and each residual edge takes up 2 integers of 32-bits.}
\label{tab:memorylayout}
\end{figure}

\subsubsection{Memory Layout of the Graph}
We pack the residual graph in memory such that vertices and residual edges are interleaved. Specifically, we have one large list that alternates between vertices and their corresponding outgoing residual edges. This interleaving is visualized in \figurename~\ref{tab:memorylayout}. The benefits of this interleaving are threefold: knowing the location of the outgoing residual edges from a vertex (without explicitly storing a reference to them), accessing outgoing residual edges from a vertex is sequential in memory, and iterating outgoing residual edges is fast as they are contiguous in memory.

\subsubsection{Merging Residual Edges}
We \emph{merge parallel residual edges} in our representation. Specifically, we use a single pair of residual edges $(u,v), (v,u)\in E_f$ to represent all edges between $u$ and $v$ in the original graph $G$. With an empty flow $f$, the residual capacity of $(u,v)\in E_f$ is the sum of the capacity of all edges from $u$ to $v$ in the original edge set $E$, and similarly for $(v,u)\in E_f$. If a pair of residual edges from $E_f$ corresponds to multiple edges in $E$, we do not specify how to distribute the residual capacity from $G_f$ to the corresponding edges in $G$. However, this distribution does not effect the minimum $s$-$t$ cut. Our resulting graph has no residual edges with the same source and target, called \emph{parallel} residual edges. Even if no parallel edges exist in $G$, it is possible to have parallel edges in $G_f$. For instance, this happens when each edge from $E$ has a unique pair of residual edges and if there is a pair $u,v\in G_f$ where $(u,v),(v,u)\in E$.

\subsubsection{Referencing a Vertex}

We represent a reference to a vertex using its index in the interleaved list. If $G$ is large enough that we are unable to index the interleaved list using a 32-bit index, we say that the dataset is \emph{extra-large}. For extra-large datasets, we instead let each vertex reference stored in an edge be the index offset between the incident vertices. We say that we use the \emph{relative index}. We get the (absolute) index of the head of an edge by adding the index of the current vertex. For extra-large datasets, we also use the relative index for accessing the next active vertex. As the reference to the next active vertex does not have to follow the edges of the graph, there are cases (for extra-large datasets) where a few of the values are too large for the data type in the vertices. In this case, we note this in the vertex and store the too-large value in an external queue using the (absolute) index. The external queue is a subsequence of the linked-list containing active vertices. Therefore, if we are looking for the index of the next active vertex from a vertex with a too-large next active relative index, the wanted index will be at the front of the external queue. Relative indices of $32$ bits suffice for most extra-large computer vision datasets, as internal edges of the graph often connect only the local neighborhood of a vertex, which has similarly labeled vertices. 

\subsubsection{Representing the \texorpdfstring{$S$ and $T$}{S and T} Trees}

In each vertex $v$, we use flags to indicate if that vertex is free, an orphan, or a tree vertex. If the vertex is a tree vertex, we also indicate if its parent edge is incident with a terminal, and whether $v\in S$ or $v\in T$. For each $v$ that is a tree vertex with a parent edge that is not incident with a terminal, we represent the parent of $v$ implicitly by the head of its first outgoing residual edge. We maintain this by swapping the order of outgoing residual edges when growing and reconnecting trees. 

\subsubsection{Representing the Mirror}
Throughout the algorithm, we often need to find the mirror $e'\in E_f$ of a residual edge $e\in E_f$ from a vertex $v$. We find this mirror using one of two approaches. The \emph{index-based} approach uses the index of $e'$ in the list of outgoing residual edges from the head of $e$. We get the index of $e'$ in the interleaved list by adding the saved index to the index of the head of $e$. Alternatively, the \emph{scan-based} approach goes over all outgoing residual edges from the head of $e$ to find a residual edge with $v$ as its head. As no parallel edges are present, this residual edge is $e'$. We use the index-based approach for all internal edges $e\in E_f$ that go from a tree vertex $v$ to the parent of $v$, and the scan-based approach otherwise (when reconnecting orphans or growing $S$ and $T$). The index for the index-based approach is never larger than the vertex degree of the non-terminal parent and therefore ecodable using few bits. For the scan-based approach, scanning for an edge is fast as the outgoing residual edges are contiguous in memory. As in prior work (e.g.,\ \cite{jensen2023a, goldberg2011a}), each residual edge stores a flag indicating whether its mirror has any residual capacity left. This can avoid searching for (and accessing) the mirror residual edge if we only need to check whether it is saturated (when checking $e$ during growth and looking for potential parents in the adoption stage). 

\subsubsection{Overloading Values in a Vertex}
We store six values in each vertex $v$: 1) the next active vertex from $v$; 2) its timestamp; 3) its distance; 4) its degree; 5) if $v\in S$ or if $v\in T$; and 6) the index of the edge from the parent of $v$ (finding the mirror using the index-based approach). The latter (6) is additionally used to indicate if $v$ is free, an orphan, or connected directly to the terminal. Only if none of these states apply, the latter value (6) makes sense (there is a residual edge from a non-terminal parent to $v$) and is used. We also overload the timestamp value (2) to represent the terminal residual capacity to or from $v$. We can do this, as the timestamp is not needed if the vertex is connected directly to the terminal.

\subsection{Preprocessing}
To place the vertices in memory with space for their outgoing residual edges, we need to know the degree of all vertices. We use one of two approaches for finding the degree of the vertices before building the final graph structure. If we build (or preprocess) the graph such that no parallel residual edges are present, we can simply count up the number of outgoing residual edges from each vertex. Alternatively, we can build a temporary data structure with all residual edges, merge potential parallel residual edges, and then count outgoing residual edges. Using a temporary data structure takes up more space, as we typically have duplicate graphs represented when loading the merged residual edges into the final graph structure. We experience that the increase in construction time from merging parallel edges is insignificant for the considered datasets and is often outweighed by the speedup of working with a smaller graph.

\subsection{Assumptions}
Any representation assumes that the values can fit within the bits allocated to the values. Our representation additionally assumes that: 1) the residual capacity of each edge in $G_f$ always fits in a $31$ bit unsigned integer; 2) each relative index (index offset in the interleaved list between neighboring vertices) fits in a $32$ bit signed integer; and 3) the degree of each non-terminal vertex is bounded. The latter assumption (3) has a hard constraint that degrees should fit in the allocated bits ($deg(v)<2^{23}$). However, we also use it to ensure that it is fast for the algorithm to iterate through, and swap the order of, outgoing residual edges. The first assumption (1) will be satisfied when each pair of vertices have the sum of their edge capacities less than $2^{31}$. The second assumption (2) can be avoided in most cases (even for extra-large datasets) by choosing a smart labeling of the vertices, as done in Section~\ref{sec:largevolumes}. These assumptions are often realized for graphs coming from vision datasets, and can be validated at construction time.

\begin{table*}[!t]
\renewcommand{\arraystretch}{1.3}
\centering
\begin{threeparttable}
\caption{Number of bytes used to represent a graph, and what the bits are allocated to, using the different BK algorithm implementations. This does not include potential savings from merging edges with the same source and target or any additional space for temporary data structures used before packing the graph.}
\label{tab:spacetable}
\begin{tabular}{lll|c|c|c|c|} 
\cline{3-7}
                        & \multicolumn{1}{l|}{\multirow{2}{*}{}}  & \multicolumn{1}{c|}{\multirow{2}{*}{Value}}& \multirow{2}{*}{\texttt{BK} \cite{boykov2004a}} & \multirow{2}{*}{\texttt{mBK} \cite{jensen2023a}} & \multirow{2}{*}{\texttt{mBK-r} \cite{jensen2023a}} & \multirow{2}{*}{\shortstack{\texttt{fcBK} and\\\texttt{cBK} (ours)}}  \\
                        & \multicolumn{1}{l|}{}  & \multicolumn{1}{c|}{}&  &  &  &   \\ \hline
\multicolumn{1}{|l|}{\multirow{14}{*}{\rotatebox[origin=c]{90}{Bits}}}          &
\multicolumn{1}{|l|}{\multirow{9}{*}{\rotatebox[origin=c]{90}{Vertex}}} 
                                                    & Next active               & $|\texttt{pointer}|$  & $|\texttt{index}|$& $|\texttt{index}|$& $32$          \\
\multicolumn{1}{|l|}{}  & \multicolumn{1}{|l|}{}    & Timestamp                 & $|\texttt{long}|$     & $32$              & $32$              & $32$          \\
\multicolumn{1}{|l|}{}  & \multicolumn{1}{|l|}{}    & Distance                  & $|\texttt{int}|$      & $16$              & $16$              & $17$          \\
\multicolumn{1}{|l|}{}  & \multicolumn{1}{|l|}{}    & Vertex degree             & $0$                   & $0$               & $0$               & $23$          \\
\multicolumn{1}{|l|}{}  & \multicolumn{1}{|l|}{}    & Index of the edge from the parent & $0$                   & $0$               & $0$               & $23$          \\
\multicolumn{1}{|l|}{}  & \multicolumn{1}{|l|}{}    & Is the vertex in $S$ or $T$?         & $1$                   & $1$               & $1$               & $1$           \\
\multicolumn{1}{|l|}{}  & \multicolumn{1}{|l|}{}    & Parent vertex             & $|\texttt{pointer}|$  & $|\texttt{index}|$& $|\texttt{index}|$& $0$           \\
\multicolumn{1}{|l|}{}  & \multicolumn{1}{|l|}{}    & Terminal residual capacity& $32^\dagger$     & $32^\dagger$ & $32^\dagger$ & $0$           \\
\multicolumn{1}{|l|}{}  & \multicolumn{1}{|l|}{}    & First outgoing edge       & $|\texttt{pointer}|$  & $|\texttt{index}|$& $|\texttt{index}|$& $0$           \\ \cline{2-7}
\multicolumn{1}{|l|}{}  & \multicolumn{1}{|l|}{\multirow{5}{*}{\rotatebox[origin=c]{90}{Residual edge}}}
                                                    & Head of edge              & $|\texttt{pointer}|$  & $|\texttt{index}|$& $|\texttt{index}|$& $32$          \\
\multicolumn{1}{|l|}{}  & \multicolumn{1}{|l|}{}    & Mirror edge               & $|\texttt{pointer}|$  & $0$               & $0$               & $0$           \\
\multicolumn{1}{|l|}{}  & \multicolumn{1}{|l|}{}    & Next outgoing edge        & $|\texttt{pointer}|$  & $0$               & $0$               & $0$           \\
\multicolumn{1}{|l|}{}  & \multicolumn{1}{|l|}{}    & Is the mirror edge saturated?   & $0$                   & $0$               & $1$               & $1$           \\
\multicolumn{1}{|l|}{}  & \multicolumn{1}{|l|}{}    & Residual edge capacity    & $32^\dagger$     & $32^\dagger$ & $32^\dagger$ & $31$          \\ \hline \hline
\multicolumn{1}{|l|}{\multirow{3}{*}{\rotatebox[origin=c]{90}{Bytes*}}} &
\multicolumn{1}{|l|}{\multirow{3}{*}{\rotatebox[origin=c]{90}{Total}}}
                                                    & Small$^\ddagger$ dataset    & $28n+16m_{i}$             & $23n+12m_{i}$         & $23n+13m_{i}$         & $16n+8m_{i}$      \\
\multicolumn{1}{|l|}{}  & \multicolumn{1}{|l|}{}    & Large$^\ddagger$ dataset    & $48n+32m_{i}$             & $23n+12m_{i}$         & $23n+13m_{i}$         & $16n+8m_{i}$      \\
\multicolumn{1}{|l|}{}  & \multicolumn{1}{|l|}{} & Extra-large$^\ddagger$ dataset & $48n+32m_{i}$             & $35n+20m_{i}$         & $35n+21m_{i}$         & $16n+8m_{i}$      \\ \hline
\end{tabular}
\begin{tablenotes}
    \item [*] Size of the object on our machine (\texttt{LP64} data model) using \texttt{sizeof}, which includes buffer space to align values inside an object.
    \item [$\dagger$] Can be adjusted to the dataset. 
    \item [$\ddagger$] We say that a dataset is \emph{small} if we can build the corresponding residual graph and find its minimum $s$-$t$ cut using a 32-bit compiled program. A \emph{large} dataset is a dataset that is not small (requires a 64-bit compiled program), but where 32-bit indices suffice for indexing vertices and edges in the residual graph. An \emph{extra-large} dataset is a dataset that is neither small nor large.
    \item [$n$] Number of vertices in $G$.
    \item [$m_i$] Number of internal (residual) edges in $G$.
\end{tablenotes}
\end{threeparttable}
\end{table*}

\section{The Fast and Compact Boykov-Kolmogorov Algorithm}\label{sec:fcbk}

The \emph{fast and compact Boykov-Kolmogorov} (fcBK) algorithm our new algorithm. It is a variant of the BK algorithm \cite{boykov2004a}. It has algorithmic modifications that improve the worst-case behavior and an implementation that improves its practical performance. Like the BK algorithm, our algorithm finds augmenting paths by looping through the three stages called \emph{growth}, \emph{augmentation}, and \emph{adoption} (see Section~\ref{sec:bk} for details). Algorithmically, fcBK differs from BK only by introducing orphan-path flags that eliminate duplicate work in \texttt{findroot}, which is the bottleneck of the running time analysis as identified in Section~\ref{sec:bk_runtime}. 

\subsection{Orphan-Path Flags}
An \emph{orphan-path flag} is a flag that can be set for tree vertices. A flagged vertex is confirmed to be in an tree rooted by an orphan within the current adoption stage. The \texttt{findroot} procedure will terminate early if it encounters a flagged vertex. This mirrors how existing heuristics mark vertices connected to terminals. Specifically, we alter \texttt{findroot} if the search encounters an orphan or a vertex with an orphan-path flag. In this case, the algorithm sets orphan-path flags in all vertices on the path searched. Additionally, we change what happens when processing an orphan with an orphan-path flag. If that orphan is made free, the algorithm removes its orphan-path flag. On the other hand, if the orphan is reconnected to the terminal-rooted tree, the algorithm traverses all flagged vertices in the tree rooted at that orphan using a depth-first traversal, and removes their flags. The following lemma shows why we can terminate a \texttt{findroot} search early.

\begin{lemma}\label{lem:opf}
    A vertex with an orphan-path flag is in a tree rooted by an orphan.
\end{lemma}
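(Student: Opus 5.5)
We prove the lemma by establishing an invariant that holds throughout each adoption stage: \emph{every flagged vertex $v$ has a path of parent edges leading to an orphan, and every vertex on that path is either flagged or is that orphan.} The invariant implies the lemma, since following parent edges from $v$ ends at an orphan rather than a terminal, so $v$ lies in a tree rooted by an orphan. We argue by induction over the elementary operations of the adoption stage: setting flags in \texttt{findroot}, reconnecting an orphan, making an orphan free, and creating new orphans. Flags are only set within an adoption stage. We assume, as the flag-removal rules should guarantee, that no flags survive into the next growth stage; otherwise the statement would have to be restricted to the adoption stage.

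\textbf{Base case and flag setting.} At the start of the adoption stage no vertex is flagged, so the invariant holds vacuously. When \texttt{findroot} sets flags, it has traversed parent edges from a potential parent $u$ and reached either an orphan or a flagged vertex $w$. In the first case, every vertex on the searched path now has a parent chain to that orphan, all of whose interior vertices are flagged. In the second case, we concatenate the searched path with the chain guaranteed for $w$ by the induction hypothesis. The searched vertices do not change parent during the search, so the concatenated chain is valid.

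\textbf{Operations that modify the forest.} This is the main obstacle, since these operations change parent edges or roots. Parent edges change in three ways.
\begin{itemize}
\item An orphan $o$ is reconnected. Before reconnection, the flagged vertices whose chains end at $o$ are exactly the flagged vertices of the tree rooted at $o$. Because these chains consist only of flagged vertices, a depth-first traversal from $o$ that descends only into flagged children reaches all of them. The algorithm removes their flags, and also $o$'s own flag if it has one. The flags that remain have chains ending at other orphans, and those chains avoid the tree of $o$, since each chain lies in a single tree. Hence their chains are unaffected.
\item An orphan $o$ is made free. Its flag is removed, and its children become orphans. Any chain that passed through $o$ ended at $o$, so it entered $o$ through one of its children $c$. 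That child $c$ is now itself an orphan, and the truncated chain ending at $c$ satisfies the invariant.
\item A vertex is made orphan because its parent edge was removed. This only shortens chains, and the new chains end at the new orphan.
\end{itemize}
Other changes, such as potential parents becoming active, do not alter parent edges. We also need that flagged vertices are never reattached individually: only orphans get new parents. A flagged vertex that is itself an orphan is covered by the reconnect and free cases.

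The delicate point is the reconnection case. There we must check that the depth-first traversal truly clears every flagged vertex whose chain ends at $o$. It does, because each such chain is contiguous in flagged vertices from $o$ downward, so the traversal through flagged children reaches the whole chain. Once this is verified, the invariant holds after every operation, and the lemma follows.
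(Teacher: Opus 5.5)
Your forward invariant captures the same idea as the paper's backward recursion: a flagged vertex's parent chain consists of flagged vertices ending at an orphan, and a flagged vertex only gets a new parent as an orphan being reconnected, which clears its flag. However, the reconnection step, which you yourself call the delicate point, does not go through with the invariant as you state it. You deliberately let the orphan at the end of a chain be unflagged (``either flagged or is that orphan''). You then assume that whenever an orphan $o$ is reconnected, a depth-first traversal clears the flagged vertices of its tree, ``and also $o$'s own flag if it has one.'' In the algorithm as specified, the clearing traversal belongs to the modified processing of an orphan \emph{that carries} an orphan-path flag. An unflagged orphan is reconnected exactly as in BK, with no traversal. So if your invariant allowed an unflagged orphan $o$ with flagged descendants, reconnecting $o$ would leave flagged vertices inside a terminal-rooted tree, and the lemma would fail.

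The fix is to strengthen the invariant so that the orphan at the end of every chain is flagged too, and to check the two places where this matters. First, \texttt{findroot} flags every vertex on the path it searched, which includes the orphan where it stops; in your concatenation case, the chain of $w$ already ends in a flagged orphan. Second, when an orphan is made free, any new orphan $c$ with flagged descendants was an interior vertex of their chains, so it is already flagged. Your third bullet is subsumed by this case, since within an adoption stage parent edges are only removed when an orphan is freed. With this strengthening, your reconnection case is correct as written.

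Separately, you do not need to assume that no flags survive into the next stage; your invariant proves it. At the end of an adoption stage there are no orphans, so no vertex can have a chain ending in an orphan, and hence no vertex is flagged. Growth and augmentation set no flags. By induction over stages, starting from the flag-free initial state, every adoption stage therefore begins without flags. This is how the paper closes its argument in its last sentence, and it removes your caveat that the statement might have to be restricted to the adoption stage.
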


\ifthenelse{\useieeetemplate=1}{\begin{IEEEproof}}{\begin{proof}}
    Let us assume that no orphan-path flags are present at the start of an adoption stage. Consider any vertex $v$ with an orphan-path flag. By construction, the algorithm has visited $v$ with a \texttt{findroot} search that found an orphan or an orphan-path flag during this stage. As flags are removed for vertices made free during this stage, $v$ is a tree vertex. We are done if $v$ is an orphan and since $v$ cannot be terminal, we can assume that $v$ has a parent. If the parent of $v$ is different from the time of the \texttt{findroot} search that flagged $v$, this adoption stage must have connected $v$ with a potential parent and removed its orphan-path flag. Therefore, the parent of $v$ has been visited by a \texttt{findroot} search during this stage and has an orphan-path flag too. Continuing recursively with the parent of $v$, we will find an orphan. As no orphans remain at the end of the adoption stage, all orphan-path flags will be removed for the following adoption stage.
\ifthenelse{\useieeetemplate=1}{\end{IEEEproof}}{\end{proof}}

Intuitively, if we have already searched from a vertex in this tree earlier in the stage, we would like to use orphan-path flags to avoid repeated work. We can let the algorithm do exactly this without changing the outcome of the search because of Lemma~\ref{lem:opf}. When removing the flag of a vertex in a tree that was just reconnected to the terminal, the algorithm additionally marks the vertex as connected to the terminal. 

\subsection{Time Complexity of the fcBK algorithm}

The description of the running time of the BK algorithm in Section~\ref{sec:bk_runtime} also applies to the fcBK algorithm. However, we have made changes to the \texttt{findroot} procedure and the processing of orphans, which effect the time complexity of the adoption stage for the fcBK algorithm. We can split the adoption stage complexity into four major components. These are: 1) exploring edges to look for a potential parent; 2) searching along edges in \texttt{findroot}; 3) removing orphan-path flags; and 4) exploring edges if the orphan was not reconnected. As described in Section~\ref{sec:bk_runtime}, the first and last components (1 and 4) do $O(1)$ work per edge for a total of $O(m)$ work, even if we include the work of initializing \texttt{findroot} from a potential parent and (possibly) reconnecting the orphan.

For the second component (2), we note that whenever \texttt{findroot} traverses a parent edge during a search, the algorithm either marks the vertex as connected to a terminal or flags it as connected to an orphan. In the latter case, the algorithm only removes the flag if it makes the vertex free or if it marks the vertex as connected to a terminal. Both cases imply that within a stage, \texttt{findroot} will only continue the search from a vertex (using a parent edge) once for a total of $O(n)$ edges traversed. For the third component (3), we do not reconsider a vertex made free in this stage, so the remaining work follows from reconnected orphans. Here we note that each vertex traversed with the purpose of removing its flag is marked as connected to the terminal, so it will not have its orphan-path flag set later in this stage. Therefore, each vertex is only traversed once in a single stage with the purpose of removing its flag, resulting in $O(m)$ work removing orphan-path flags. 

Combining these components, a single adoption stage will finish after $O(m)$ work. The result is a total running time of $O(m|C|)$ for the fcBK algorithm. If we additionally have a constant upper bound on the internal or terminal edge capacities (as in Section~\ref{sec:bk_runtime}), this further reduces to $O(m^2)$ (and $O(n^2)$ with constant vertex degrees for non-terminal vertices). This gives fcBK a fast polynomial running time that is competitive with many other maximum flow algorithms, not just limited to the BK algorithm family.

\subsection{Implementation}

We let \texttt{fcBK} denote our implementation of the fcBK algorithm using the graph representation presented in Section~\ref{sec:compact_graph}. Our implementation further exploits that the timestamp is not needed if the vertex is part of an orphan-rooted tree. Instead, it overloads the timestamp to indicate the orphan-path flag. In total, \texttt{fcBK} requires only $16$ bytes per vertex and $8$ bytes per internal edge. The full allocation of bits to values and a comparison to the other implementations are found in Table~\ref{tab:spacetable}. The table shows that \texttt{fcBK} has a significant reduction in memory regardless of dataset size compared to the existing implementations. However, we do find even larger relative savings for larger datasets. Like \cite{jensen2023a}, \texttt{fcBK} processes the orphans that come from a processed orphan right away and (only then) returns to the queue of orphans. 

\section{Benchmark Experiments} \label{sec:benchmark}

We compare variants and implementations of the BK algorithm on a comprehensive set of computer vision benchmark datasets. The experiments show that our implementations perform better in both time and memory required to find the minimum $s$-$t$ cut.

\subsection{Datasets}
Our experiments consist of a large range of computer vision benchmark instances from \url{doi.org/10.11583/DTU.17091101}. They include the widely used dataset from the University of Waterloo \cite{waterloo}, datasets from the survey of \cite{verma2012a}, as well as instances from the survey of \cite{jensen2023a}. The graph instances include 3D voxel segmentation, stereo matching, multi-view reconstruction, super-resolution, texture restoration, cleaning neural network predictions, quadratic pseudo-boolean optimization (QPBO), surface fitting, and more \cite{jensen2023a}. All datasets have integer capacities, but many additionally have an upper bound of $100$ for the edge capacity of either the internal or the terminal edges. We benchmark $7$ datasets with multiple subproblems and $54$ single-graph datasets for a total of $1719$ graph cuts. Matching the definitions of size in Section~\ref{sec:compact_graph}, the datasets with multiple subproblems are small, and the single-graph datasets consist of $28$ small datasets, $25$ large datasets, and $1$ extra-large dataset. We only have a single dataset big enough to require relative indices for our implementation that can still fit in the memory of our machine. We classify this dataset as extra-large. The results for this dataset is illustrative only. 

\subsection{Implementations}
We compare our \texttt{fcBK} implementation with \texttt{BK} \cite{boykov2004a}, \texttt{mBK} \cite{jensen2023a}, and \texttt{mBK-r} \cite{jensen2023a} on the large range of computer vision benchmark datasets. We also include \texttt{cBK}, which is our implementation of the BK algorithm using our compact graph representation from Section~\ref{sec:compact_graph}. Both our implementations use a temporary data structure to ensure that all parallel residual edges are merged. For small datasets, we compile \texttt{BK} into a 32-bit compiled program and the other implementations into 64-bit compiled programs using 32-bit indices. For large datasets, we compile all implementations into a 64-bit compiled program and let \texttt{mBK}, \texttt{mBK-r}, \texttt{cBK}, and \texttt{fcBK} use 32-bit indices. For the extra-large dataset, our implementations use 32-bit relative indices while the other implementations use 64-bit indices/pointers. For a given dataset size, these compile options (including different options for small datasets) were chosen as they achieve the smallest time and memory footprint for each implementation. 

\subsection{Measuring Time and Memory}
We run each implementation on the datasets and note the median time over five runs. We report both the \emph{solve} and \emph{total} time. The solve time includes only the time to find the maximum $s$-$t$ flow. The total time includes the solve time as well as the time to build the graph (which can include building a temporary data structure and packing to a smaller representation). The time to build the graph does not include the time to load the dataset into memory. We sum the times for problems that consist of multiple subproblems. We measure memory consumption by the size of the allocated memory used to represent the residual graph. We do not consider the space for temporary data structures, as they can be avoided (by preprocessing the graph, knowing the structure of the graph, or by building the graph directly into the final structure). We report the size of the residual graph by the number of vertices ($n$), the number of internal edges ($m_i^+$), and the number of internal edges after merging parallel residual edges ($m_i$). For datasets with multiple subproblems, we report the average sizes over the subproblems. Only our implementations merge parallel internal residual edges. We do not include the number of terminal edges, as these (if present) do not take up additional space as they are represented in the vertices. The size does not include the size for additional queues maintained (all implementations have orphan queues and our implementations have an external queue for too-large values in the extra-large datasets), as we observe them being insignificant compared to the size of the residual graph. 

\subsection{Test Setup}
We run the experiments on a machine with an \texttt{x86\_64} AMD Ryzen 9 7950X3D processor at $4.2$~GHz with $128$~MB cache and $128$~GB of memory. The OS is the only thing running on the machine besides the experiments during testing. All implementations are compiled using \texttt{gcc} version \texttt{11.4.0} with the following compile flags: \texttt{-O3 -march=native -mtune=native -DNDEBUG}. For the small datasets, we additionally compile \texttt{BK} using \texttt{-m32}.

\subsection{Results}

\begin{figure*}[!t]
    \centering
    \pgfplotslegendfromname{sharedlegend}
    \vspace{.5em}
    \subfloat[Small datasets]{
        \begin{tikzpicture}
        \begin{axis}[
            ybar,
            bar width=7pt,
            width=0.5\textwidth,
            height=6cm,
            ymin=0,
            ymax=1.15,
            enlarge x limits=0.15,
            symbolic x coords={BK, mBK, mBK\_r, cBK, cBK\_trad},
            xtick=data,
            xticklabel style={
                font=\scriptsize\ttfamily, 
                align=center, 
            },
            xticklabels={BK\\\cite{boykov2004a}, mBK\\\cite{jensen2023a}, mBK-r\\\cite{jensen2023a},fcBK\\\textnormal{(ours)},cBK\\\textnormal{(ours)}},
            ytick={0.0,0.2,0.4,0.6,0.8,1.0}, 
            yticklabels={},     
            nodes near coords*={\pgfmathprintnumber{\pgfplotspointmeta}x},
            every node near coord/.append style={
                font=\scriptsize,
                rotate=90, 
                anchor=west, 
                xshift=-2pt,
            },
            legend to name=sharedlegend, 
            legend columns=3, 
        ]
        \addplot+[blue, fill=blue!50] coordinates {(BK,1.0) (mBK,0.79) (mBK\_r,0.57) (cBK,0.41) (cBK\_trad,0.36)};
        \addplot+[orange, fill=orange!70] coordinates {(BK,1.0) (mBK,0.8) (mBK\_r,0.83) (cBK,0.69) (cBK\_trad,0.64)};
        \addplot+[green!70!black, fill=green!50] coordinates {(BK,1.0) (mBK,0.76) (mBK\_r,0.81) (cBK,0.34) (cBK\_trad,0.34)};
        \legend{Solve time ($\downarrow$)\;,Total time ($\downarrow$)\;,Memory usage ($\downarrow$)}
        \end{axis}
        \end{tikzpicture}
        \label{fig:small_bar}
    }
    \subfloat[Large datasets]{
        \begin{tikzpicture}
        \begin{axis}[
            ybar,
            bar width=7pt,
            width=0.5\textwidth,
            height=6cm,
            ymin=0,
            ymax=1.15,
            enlarge x limits=0.15,
            symbolic x coords={BK, mBK, mBK\_r, cBK, cBK\_trad},
            xtick=data,
            xticklabel style={
                font=\scriptsize\ttfamily, 
                align=center, 
            },
            xticklabels={BK\\\cite{boykov2004a}, mBK\\\cite{jensen2023a}, mBK-r\\\cite{jensen2023a},fcBK\\\textnormal{(ours)},cBK\\\textnormal{(ours)}},
            ytick={0.0,0.2,0.4,0.6,0.8,1.0}, 
            yticklabels={},     
            nodes near coords*={\pgfmathprintnumber{\pgfplotspointmeta}x},
            every node near coord/.append style={
                font=\scriptsize,
                rotate=90, 
                anchor=west, 
                xshift=-2pt,
            },
        ]
        \addplot+[blue, fill=blue!50] coordinates {(BK,1.0) (mBK,0.68) (mBK\_r,0.41) (cBK,0.26) (cBK\_trad,0.24)};
        \addplot+[orange, fill=orange!70] coordinates {(BK,1.0) (mBK,0.71) (mBK\_r,0.72) (cBK,0.44) (cBK\_trad,0.41)};
        \addplot+[green!70!black, fill=green!50] coordinates {(BK,1.0) (mBK,0.38) (mBK\_r,0.41) (cBK,0.17) (cBK\_trad,0.17)};
        \end{axis}
        \end{tikzpicture}
        \label{fig:large_bar}
    }
    \caption{Bar chart visualizing the average performance of each implementation when measuring the solve time, total time, or memory usage. Each bar averages the value as a ratio to \texttt{BK} over datasets. The datasets are split into (a) small and (b) large datasets. We do not include the extra-large dataset or the graph instances that consist of multiple subproblems. The full table of running times is found in Table~\ref{tab:running_time_real_world}.} 
    \label{fig:combinedfig_bar}
\end{figure*}

\begin{figure*}[!t]
    \centering
    \pgfplotslegendfromname{relsharedlegend}
    \vspace{.5em}
    \subfloat[Small datasets]{
        \begin{tikzpicture}
        \begin{axis}[
            width=0.49\textwidth,
            height=7cm,
            ymin=0,
            xmin=0,
            ytick={0.0,0.2,0.4,0.6,0.8,1.0}, 
            xlabel={Solve time (ratio to \texttt{BK})},
            ylabel={Memory usage (ratio to \texttt{BK})},
            legend to name=relsharedlegend, 
            legend columns=5, 
        ]
            \addplot [
                scatter,
                only marks,
                point meta=explicit symbolic,
                scatter/classes={
                    BK={mark=square*,black,mark size=3pt,fill opacity=0.25},
                    mBK={mark=triangle*,blue!80!white,mark size=4pt,fill opacity=0.25},
                    mBK_r={mark=*,green!80!black,mark size=3pt,fill opacity=0.25},
                    cBK={mark=pentagon*,violet,mark size=4pt,fill opacity=0.25},
                    cBK_trad={mark=diamond*,red!70!white,mark size=4pt,fill opacity=0.25}
                },
            ] table [x=x, y=y, meta=label] {rel_performance_small.dat};
            \addplot [black, dashed, thick] coordinates {(0,1) (1,1)}; 
            \addplot [black, dashed, thick] coordinates {(1,0) (1,1)}; 
            \legend{\texttt{BK} \cite{boykov2004a}\;, \texttt{mBK} \cite{jensen2023a}\;, \texttt{mBK-r} \cite{jensen2023a}\;, \texttt{fcBK} (ours)\;, \texttt{cBK} (ours)}
        \end{axis}
        \end{tikzpicture}
        \label{fig:small_rel}
    }
    \subfloat[Large datasets]{
        \begin{tikzpicture}
        \begin{axis}[
            width=0.49\textwidth,
            height=7cm,
            ymin=0,
            xmin=0,
            ytick={0.0,0.2,0.4,0.6,0.8,1.0}, 
            xlabel={Solve time (ratio to \texttt{BK})},
        ]
            \addplot [
                scatter,
                only marks,
                point meta=explicit symbolic,
                scatter/classes={
                    BK={mark=square*,black,mark size=3pt,fill opacity=0.25},
                    mBK={mark=triangle*,blue!80!white,mark size=4pt,fill opacity=0.25},
                    mBK_r={mark=*,green!80!black,mark size=3pt,fill opacity=0.25},
                    cBK={mark=pentagon*,violet,mark size=4pt,fill opacity=0.25},
                    cBK_trad={mark=diamond*,red!70!white,mark size=4pt,fill opacity=0.25}
                },
            ] table [x=x, y=y, meta=label] {rel_performance_large.dat};
            \addplot [black, dashed, thick] coordinates {(0,1) (1,1)}; 
            \addplot [black, dashed, thick] coordinates {(1,0) (1,1)}; 
        \end{axis}
        \end{tikzpicture}
        \label{fig:large_rel}
    }
    \caption{Scatterplot visualizing the performance of each implementation across datasets. Each point maps the solve time and memory usage of an implementation as a ratio to \texttt{BK} for a dataset. The datasets are split into (a) small and (b) large datasets. We do not include the extra-large dataset or the graph instances that consist of multiple subproblems. The full table of running times is found in Table~\ref{tab:running_time_real_world}.} 
    \label{fig:combinedfig_rel}
\end{figure*}

\begin{table*}[p]
\resizebox{\textwidth}{!}{
\begin{threeparttable}
\caption{Median time over five runs for the considered implementations and datasets. 'Solve' denotes the time to compute the maximum $s$-$t$ flow, and 'Total' denotes both 'Solve' and the time to build the graph. For each dataset, the fastest solve time is \textbf{bold} and the fastest total time is \underline{underlined}.}
\label{tab:running_time_real_world}
\begin{tabular}{lccccrlrlrlrlrl}
\multicolumn{5}{c}{Dataset}     & \multicolumn{2}{c}{\texttt{BK} \cite{boykov2004a}} & \multicolumn{2}{c}{\texttt{mBK}\cite{jensen2023a}} & \multicolumn{2}{c}{\texttt{mBK-r} \cite{jensen2023a}} & \multicolumn{2}{c}{\texttt{fcBK} (ours)} & \multicolumn{2}{c}{\texttt{cBK} (ours)}\\
\hline
Name & T & $n$ & $m_i^+$ & $m_i$ & \multicolumn{1}{r}{Solve} & \multicolumn{1}{l}{Total} & \multicolumn{1}{r}{Solve} & \multicolumn{1}{l}{Total} & \multicolumn{1}{r}{Solve} & \multicolumn{1}{l}{Total} & \multicolumn{1}{r}{Solve} & \multicolumn{1}{l}{Total} & \multicolumn{1}{r}{Solve} & \multicolumn{1}{l}{Total}\\
\hline
\multicolumn{15}{c}{3D segmentation: voxel-based. From \cite{waterloo}, based on \cite{boykov2001a, boykov2006a, society2003a}.}\\\hline     
*adhead.n26c100 & L & $12.6$ M & $654.3$ M & $324.2$ M & $92.35$ s & $97.98$ s & $85.88$ s & $89.71$ s & $39.28$ s & $67.22$ s & $19.32$ s & $27.23$ s & \boldmath{\textbf{$17.08$ s}} & \underline{$24.39$ s}\\
*adhead.n6c100 & L & $12.6$ M & $151.0$ M & $75.2$ M & $20.79$ s & $22.16$ s & $15.98$ s & $16.88$ s & $12.67$ s & $15.03$ s & \boldmath{\textbf{$5.16$ s}} & \underline{$6.49$ s} & $5.26$ s & $6.50$ s\\
*babyface.n26c100 & L & $5.1$ M & $263.2$ M & $129.9$ M & $77.22$ s & $79.47$ s & $72.84$ s & $74.31$ s & $47.68$ s & $58.94$ s & $25.27$ s & $28.58$ s & \boldmath{\textbf{$19.64$ s}} & \underline{$22.72$ s}\\
*babyface.n6c100 & L & $5.1$ M & $60.8$ M & $30.2$ M & $7.52$ s & $8.08$ s & $5.63$ s & $5.80$ s & $4.23$ s & $4.98$ s & $1.99$ s & $2.53$ s & \boldmath{\textbf{$1.98$ s}} & \underline{$2.48$ s}\\
*bone.n26c100 & L & $7.8$ M & $405.5$ M & $200.5$ M & $22.07$ s & $25.58$ s & $17.56$ s & $19.81$ s & $5.30$ s & $22.73$ s & \boldmath{\textbf{$3.87$ s}} & $8.81$ s & $4.12$ s & \underline{$8.61$ s}\\
*bone.n6c100 & L & $7.8$ M & $93.6$ M & $46.5$ M & $5.33$ s & $6.29$ s & $3.62$ s & $4.16$ s & $2.26$ s & $3.73$ s & $1.26$ s & $2.09$ s & \boldmath{\textbf{$1.17$ s}} & \underline{$1.94$ s}\\
*bone-x.n26c100 & L & $3.9$ M & $202.8$ M & $100.0$ M & $13.74$ s & $15.49$ s & $11.35$ s & $12.49$ s & $4.17$ s & $12.44$ s & $2.82$ s & $5.03$ s & \boldmath{\textbf{$2.55$ s}} & \underline{$4.94$ s}\\
*bone-x.n6c100 & S & $3.9$ M & $46.8$ M & $23.2$ M & $3.94$ s & $4.38$ s & $2.85$ s & $3.12$ s & $2.15$ s & $2.86$ s & $1.05$ s & $1.45$ s & \boldmath{\textbf{$1.05$ s}} & \underline{$1.42$ s}\\
*bone-xyz-xy.n6c10 & S & $245$ K & $2.9$ M & $1.5$ M & $59$ ms & $67$ ms & $30$ ms & $39$ ms & $25$ ms & $47$ ms & $15$ ms & $37$ ms & \boldmath{\textbf{$15$ ms}} & \underline{$34$ ms}\\
*bone-xyz-xy.n6c100 & S & $245$ K & $2.9$ M & $1.5$ M & $65$ ms & $73$ ms & $38$ ms & $47$ ms & $32$ ms & $54$ ms & $21$ ms & $41$ ms & \boldmath{\textbf{$19$ ms}} & \underline{$39$ ms}\\
*liver.n26c100 & L & $4.2$ M & $216.4$ M & $23.0$ M & $23.11$ s & $24.98$ s & $18.76$ s & $19.98$ s & $6.58$ s & $15.32$ s & \boldmath{\textbf{$2.16$ s}} & \underline{$2.68$ s} & $2.20$ s & $2.70$ s\\
*liver.n6c100 & S & $4.2$ M & $49.9$ M & $21.8$ M & $8.66$ s & $9.12$ s & $6.77$ s & $6.91$ s & $5.52$ s & $6.08$ s & $2.40$ s & $2.77$ s & \boldmath{\textbf{$2.32$ s}} & \underline{$2.68$ s}\\\hline
\multicolumn{15}{c}{3D segmentation: oriented MRF. From \cite{jensen2023a}, based on \cite{boykov2004a, grau2006a, reichardt2021a}.}\\\hline 
*vessel.orimrf.256 & L & $16.8$ M & $196.5$ M & $96.3$ M & $1.72$ s & $3.48$ s & $0.84$ s & $1.98$ s & $0.61$ s & $3.24$ s & $0.43$ s & $1.88$ s & \boldmath{\textbf{$0.41$ s}} & \underline{$1.76$ s}\\
*vessel.orimrf.512 & L & $134.2$ M & $1.6$ B & $769.6$ M & $11.86$ s & $27.09$ s & $6.15$ s & \underline{$15.08$ s} & $4.59$ s & $24.75$ s & $3.31$ s & $20.05$ s & \boldmath{\textbf{$3.16$ s}} & $18.30$ s\\\hline
\multicolumn{15}{c}{3D U-Net segmentation cleaning. From \cite{jensen2023a}, based on \cite{boykov2004a, grau2006a, reichardt2021a}.}\\\hline
*clean.orimrf.256 & L & $16.8$ M & $198.3$ M & $98.0$ M & $1.18$ s & $2.95$ s & $0.45$ s & $1.59$ s & $0.30$ s & $2.95$ s & \boldmath{\textbf{$0.20$ s}} & $1.66$ s & $0.20$ s & \underline{$1.52$ s}\\
*clean.orimrf.512 & L & $134.2$ M & $1.6$ B & $776.1$ M & $9.24$ s & $24.43$ s & $3.79$ s & \underline{$12.64$ s} & $2.45$ s & $23.40$ s & $1.93$ s & $17.85$ s & \boldmath{\textbf{$1.90$ s}} & $16.83$ s\\
*unet-mrfclean-2 & L & $8.2$ M & $97.6$ M & $48.8$ M & $0.72$ s & $1.63$ s & $0.23$ s & \underline{$0.81$ s} & $0.16$ s & $1.47$ s & $0.12$ s & $0.93$ s & \boldmath{\textbf{$0.12$ s}} & $0.90$ s\\
*unet-mrfclean-3 & L & $15.9$ M & $190.5$ M & $95.2$ M & $1.33$ s & $3.21$ s & $0.44$ s & \underline{$1.62$ s} & $0.29$ s & $2.83$ s & $0.22$ s & $1.83$ s & \boldmath{\textbf{$0.21$ s}} & $1.73$ s\\
*unet-mrfclean-8 & L & $4.9$ M & $58.6$ M & $29.3$ M & $0.54$ s & $1.08$ s & $0.23$ s & $0.58$ s & $0.17$ s & $0.96$ s & $0.13$ s & $0.54$ s & \boldmath{\textbf{$0.12$ s}} & \underline{$0.51$ s}\\\hline
\multicolumn{15}{c}{Surface fitting. From \cite{waterloo}, based on \cite{lempitsky2007a}.}\\\hline
*LB07-bunny-lrg & L & $49.5$ M & $592.9$ M & $296.5$ M & $14.86$ s & $20.53$ s & $8.31$ s & $11.90$ s & $5.29$ s & $13.42$ s & $3.49$ s & $8.42$ s & \boldmath{\textbf{$3.44$ s}} & \underline{$8.12$ s}\\\hline
\multicolumn{15}{c}{3D segmentation: sparse layered graphs (SLG). From \cite{jensen2023a}, based on \cite{jeppesen2020a}.}\\\hline
4Dpipe-small & L & $14.3$ M & $191.4$ M & $191.4$ M & $5.17$ h & $5.17$ h & $4.98$ h & $4.98$ h & $3.57$ h & $3.57$ h & $1.83$ h & $1.83$ h & \boldmath{\textbf{$1.75$ h}} & \underline{$1.75$ h}\\
NT32-tomo3-.raw-3 & L & $7.2$ M & $84.0$ M & $83.8$ M & $22.32$ s & $23.27$ s & $16.40$ s & $16.94$ s & $8.14$ s & \underline{$9.64$ s} & $7.43$ s & $10.58$ s & \boldmath{\textbf{$6.87$ s}} & $9.99$ s\\
NT32-tomo3-.raw-5 & L & $11.1$ M & $128.3$ M & $128.1$ M & $34.83$ s & $36.27$ s & $25.59$ s & $26.41$ s & $12.88$ s & \underline{$15.14$ s} & $11.33$ s & $16.42$ s & \boldmath{\textbf{$10.51$ s}} & $15.56$ s\\
NT32-tomo3-.raw-10 & L & $22.7$ M & $263.5$ M & $262.9$ M & $75.99$ s & $79.00$ s & $56.44$ s & $58.20$ s & $28.70$ s & \underline{$33.41$ s} & $26.36$ s & $38.17$ s & \boldmath{\textbf{$24.22$ s}} & $35.90$ s\\
NT32-tomo3-.raw-30 & L & $67.9$ M & $789.6$ M & $787.9$ M & $336.0$ s & $345.4$ s & $255.8$ s & $261.1$ s & $134.9$ s & $149.1$ s & $110.7$ s & $148.2$ s & \boldmath{\textbf{$103.0$ s}} & \underline{$139.9$ s}\\
NT32-tomo3-.raw-100 & XL & $183.6$ M & $2.1$ B & $2.1$ B & $1134.0$ s & $1158.7$ s & $1039.6$ s & $1060.2$ s & $536.6$ s & $599.1$ s & $384.2$ s & $493.0$ s & \boldmath{\textbf{$357.0$ s}} & \underline{$463.5$ s}\\\hline
\multicolumn{15}{c}{3D segmentation: seperating surfaces. From \cite{jensen2023a}, based on \cite{jensen2020a}.}\\\hline
cells.sd3 & L & $13.9$ M & $226.2$ M & $226.2$ M & $66.79$ s & $69.38$ s & $50.21$ s & $52.03$ s & $25.40$ s & $32.75$ s & $23.60$ s & $29.97$ s & \boldmath{\textbf{$21.01$ s}} & \underline{$27.24$ s}\\
foam.set.r160.h210 & L & $15.1$ M & $385.3$ M & $385.3$ M & $18.96$ s & $22.95$ s & $13.11$ s & $15.76$ s & \boldmath{\textbf{$2.76$ s}} & \underline{$15.75$ s} & $4.37$ s & $16.68$ s & $4.19$ s & $16.37$ s\\
foam.set.r60.h210 & S & $2.0$ M & $46.3$ M & $46.3$ M & $1.93$ s & $2.42$ s & $1.27$ s & \underline{$1.43$ s} & \boldmath{\textbf{$0.27$ s}} & $1.69$ s & $0.45$ s & $1.85$ s & $0.43$ s & $1.80$ s\\
*simcells.sd3 & S & $3.6$ M & $47.6$ M & $47.6$ M & $10.47$ s & $10.97$ s & $7.55$ s & $7.75$ s & $5.05$ s & $6.46$ s & $4.08$ s & $4.98$ s & \boldmath{\textbf{$3.77$ s}} & \underline{$4.68$ s}\\\hline
\multicolumn{15}{c}{3D segmentation: Deep LOGISMOS. From \cite{jensen2023a}, based on \cite{quo2018a}.}\\\hline
*deeplogismos.2 & S & $511$ K & $7.2$ M & $7.1$ M & $190$ ms & $0.22$ s & $83$ ms & \underline{$0.11$ s} & \boldmath{\textbf{$12$ ms}} & $0.18$ s & $19$ ms & $0.17$ s & $19$ ms & $0.17$ s\\
*deeplogismos.3 & S & $707$ K & $9.9$ M & $9.9$ M & $104$ ms & $0.15$ s & $85$ ms & \underline{$0.12$ s} & \boldmath{\textbf{$19$ ms}} & $0.21$ s & $29$ ms & $0.21$ s & $29$ ms & $0.21$ s\\
*deeplogismos.7 & S & $989$ K & $13.8$ M & $13.8$ M & $247$ ms & $0.32$ s & $190$ ms & \underline{$0.25$ s} & \boldmath{\textbf{$30$ ms}} & $0.37$ s & $45$ ms & $0.35$ s & $45$ ms & $0.35$ s\\\hline
\multicolumn{15}{c}{Super resolution. From \cite{verma2012a}, based on \cite{rother2007a, freeman2000a}.}\\\hline
super-res-E1 & S & $10$ K & $82$ K & $35$ K & $ 0$ ms & \underline{$ 0$ ms} & $ 0$ ms & $ 1$ ms & $ 0$ ms & $ 1$ ms & $ 0$ ms & $ 1$ ms & \boldmath{\textbf{$ 0$ ms}} & $ 1$ ms\\
super-res-E2 & S & $10$ K & $164$ K & $70$ K & $ 1$ ms & \underline{$ 1$ ms} & $ 1$ ms & $ 2$ ms & $ 1$ ms & $ 2$ ms & $ 1$ ms & $ 2$ ms & \boldmath{\textbf{$ 1$ ms}} & $ 2$ ms\\
superres-graph & S & $43$ K & $1.3$ M & $87$ K & $33$ ms & $36$ ms & $ 8$ ms & $12$ ms & $ 5$ ms & $20$ ms & $ 3$ ms & $ 5$ ms & \boldmath{\textbf{$ 3$ ms}} & \underline{$ 5$ ms}\\\hline
\multicolumn{15}{c}{Texture. From \cite{verma2012a}, based on \cite{rother2007a}.}\\\hline
texture-Cremer & S & $44$ K & $1.4$ M & $695$ K & $0.90$ s & $0.90$ s & $0.76$ s & $0.76$ s & $0.73$ s & $0.75$ s & $0.32$ s & $0.33$ s & \boldmath{\textbf{$0.25$ s}} & \underline{$0.26$ s}\\
texture-OLD-D103 & S & $43$ K & $1.3$ M & $655$ K & $0.38$ s & $0.38$ s & $0.27$ s & $0.27$ s & $0.24$ s & $0.26$ s & $0.15$ s & $0.16$ s & \boldmath{\textbf{$0.13$ s}} & \underline{$0.14$ s}\\
texture-Paper1 & S & $43$ K & $1.3$ M & $655$ K & $0.36$ s & $0.37$ s & $0.30$ s & $0.31$ s & $0.26$ s & $0.28$ s & $0.16$ s & $0.17$ s & \boldmath{\textbf{$0.14$ s}} & \underline{$0.15$ s}\\
texture-Temp & S & $14$ K & $421$ K & $210$ K & $90$ ms & $91$ ms & $105$ ms & $106$ ms & $102$ ms & $105$ ms & $76$ ms & $80$ ms & \boldmath{\textbf{$58$ ms}} & \underline{$61$ ms}\\\hline
\multicolumn{15}{c}{Multi-view. From \cite{waterloo}, based on \cite{boykov2006b, bischof2006a}.}\\\hline
BL06-camel-lrg & L & $18.9$ M & $147.6$ M & $75.4$ M & $65.08$ s & $66.80$ s & $55.67$ s & $56.61$ s & $55.39$ s & $57.72$ s & $20.88$ s & $22.08$ s & \boldmath{\textbf{$20.75$ s}} & \underline{$21.88$ s}\\
BL06-gargoyle-lrg & L & $17.2$ M & $136.5$ M & $68.6$ M & $143.8$ s & $145.4$ s & $123.0$ s & $123.9$ s & $121.8$ s & $124.0$ s & $46.22$ s & $47.36$ s & \boldmath{\textbf{$45.39$ s}} & \underline{$46.45$ s}\\\hline
\multicolumn{15}{c}{Deconvolution. From \cite{verma2012a}, based on \cite{rother2007a}.}\\\hline
graph3x3 & S & $2.0$ K & $87$ K & $43$ K & $ 3$ ms & $ 3$ ms & $ 4$ ms & $ 4$ ms & $ 3$ ms & $ 3$ ms & $ 2$ ms & $ 2$ ms & \boldmath{\textbf{$ 1$ ms}} & \underline{$ 2$ ms}\\
graph5x5 & S & $2.0$ K & $271$ K & $135$ K & $36$ ms & $36$ ms & $35$ ms & $36$ ms & $24$ ms & $26$ ms & $12$ ms & $14$ ms & \boldmath{\textbf{$ 9$ ms}} & \underline{$11$ ms}\\\hline
\multicolumn{15}{c}{Stereo. From \cite{waterloo}, based on \cite{boykov1998a, kolmogorov2001a}.}\\\hline
*BVZ-sawtooth (s) & S & $164$ K & $1.3$ M & $658$ K & $0.38$ s & $0.44$ s & $0.32$ s & $0.39$ s & $0.32$ s & $0.46$ s & $0.23$ s & $0.41$ s & \boldmath{\textbf{$0.21$ s}} & \underline{$0.38$ s}\\
*BVZ-tsukuba (s) & S & $110$ K & $882$ K & $441$ K & $0.18$ s & $0.21$ s & $0.16$ s & $0.20$ s & $0.14$ s & $0.21$ s & $0.11$ s & $0.21$ s & \boldmath{\textbf{$0.11$ s}} & \underline{$0.19$ s}\\
*BVZ-venus (s) & S & $166$ K & $1.3$ M & $663$ K & $0.66$ s & $0.73$ s & $0.60$ s & $0.68$ s & $0.58$ s & $0.74$ s & $0.41$ s & $0.61$ s & \boldmath{\textbf{$0.38$ s}} & \underline{$0.57$ s}\\
KZ2-sawtooth (s) & S & $294$ K & $3.3$ M & $1.7$ M & $1.10$ s & $1.28$ s & $1.02$ s & $1.20$ s & $0.97$ s & $1.37$ s & $0.80$ s & $1.25$ s & \boldmath{\textbf{$0.74$ s}} & \underline{$1.16$ s}\\
KZ2-tsukuba (s) & S & $199$ K & $2.3$ M & $1.1$ M & $0.65$ s & $0.75$ s & $0.61$ s & $0.71$ s & $0.57$ s & $0.79$ s & $0.47$ s & $0.71$ s & \boldmath{\textbf{$0.44$ s}} & \underline{$0.66$ s}\\
KZ2-venus (s) & S & $301$ K & $3.5$ M & $1.7$ M & $1.78$ s & $1.99$ s & $1.61$ s & $1.82$ s & $1.52$ s & $1.97$ s & $1.26$ s & $1.77$ s & \boldmath{\textbf{$1.17$ s}} & \underline{$1.64$ s}\\\hline
\multicolumn{15}{c}{Decision tree field (DTF). From \cite{verma2012a}, based on \cite{nowozin2011a}.}\\\hline
printed-graph1 & S & $20$ K & $2.3$ M & $1.1$ M & $0.48$ s & $0.49$ s & $0.41$ s & $0.42$ s & $0.27$ s & $0.31$ s & $0.24$ s & $0.26$ s & \boldmath{\textbf{$0.17$ s}} & \underline{$0.19$ s}\\
printed-graph16 & S & $11$ K & $1.3$ M & $659$ K & $165$ ms & $168$ ms & $143$ ms & $147$ ms & $88$ ms & $107$ ms & $86$ ms & $99$ ms & \boldmath{\textbf{$50$ ms}} & \underline{$62$ ms}\\\hline
\multicolumn{15}{c}{Graph matching. From \cite{jensen2023a}, based on \cite{hutschenreiter2021a}.}\\\hline
pair1.dd (s) & S & $1.1$ K & $114$ K & $5.1$ K & $ 3$ ms & $ 4$ ms & $ 3$ ms & $ 7$ ms & $ 1$ ms & $ 9$ ms & $ 1$ ms & $ 2$ ms & \boldmath{\textbf{$ 1$ ms}} & \underline{$ 2$ ms}\\\hline
\multicolumn{15}{c}{Mesh segmentation. From \cite{jensen2023a}, based on \cite{liu2015a}.}\\\hline
*bunny.segment & S & $97$ K & $1.2$ M & $877$ K & $91$ ms & $102$ ms & $69$ ms & $72$ ms & $60$ ms & $71$ ms & $41$ ms & $62$ ms & \boldmath{\textbf{$38$ ms}} & \underline{$59$ ms}\\
*bunnybig.segment & S & $2.0$ M & $28.1$ M & $22.1$ M & $1.36$ s & $1.69$ s & $0.86$ s & $1.06$ s & $0.49$ s & \underline{$0.95$ s} & $0.49$ s & $1.15$ s & \boldmath{\textbf{$0.43$ s}} & $1.07$ s\\
*candle.segment & S & $159$ K & $2.1$ M & $1.6$ M & $84$ ms & $105$ ms & $61$ ms & \underline{$67$ ms} & $41$ ms & $67$ ms & $32$ ms & $75$ ms & \boldmath{\textbf{$29$ ms}} & $72$ ms\\
*candlebig.segment & S & $1.3$ M & $11.5$ M & $7.7$ M & $0.46$ s & $0.61$ s & $0.28$ s & \underline{$0.36$ s} & $0.21$ s & $0.41$ s & $0.17$ s & $0.41$ s & \boldmath{\textbf{$0.16$ s}} & $0.40$ s\\
*chair.segment & S & $305$ K & $4.3$ M & $3.4$ M & $0.80$ s & $0.84$ s & $0.55$ s & $0.56$ s & $0.40$ s & $0.47$ s & $0.29$ s & $0.39$ s & \boldmath{\textbf{$0.27$ s}} & \underline{$0.37$ s}\\
*chairbig.segment & S & $3.5$ M & $56.4$ M & $45.9$ M & $2.65$ s & $3.31$ s & $1.57$ s & $1.95$ s & $0.81$ s & \underline{$1.77$ s} & $0.85$ s & $2.19$ s & \boldmath{\textbf{$0.74$ s}} & $2.10$ s\\
*handsmall.segment & S & $15$ K & $155$ K & $108$ K & $ 2$ ms & \underline{$ 2$ ms} & $ 2$ ms & $ 2$ ms & $ 2$ ms & $ 3$ ms & $ 1$ ms & $ 3$ ms & \boldmath{\textbf{$ 1$ ms}} & $ 3$ ms\\
*handbig.segment & S & $248$ K & $2.5$ M & $1.7$ M & $103$ ms & $134$ ms & $82$ ms & $89$ ms & $75$ ms & $114$ ms & $54$ ms & $87$ ms & \boldmath{\textbf{$49$ ms}} & \underline{$83$ ms}\\\hline
\end{tabular}
\begin{tablenotes}
\item[*] All terminal edges (or internal edges) of this dataset have an upper bound on their capacity of 100. This results in a total of $O(m)$ augmenting paths found by the algorithms for this dataset.
\item[T] Size type: A value of "S" indicates a \emph{small} dataset, so all implementations can build the residual graph $G_f$ and find the maximum $s$-$t$ flow (and minimum $s$-$t$ cut) using a 32-bit compiled program. A value of "L" indicates a \emph{large} dataset, so implementations (other than \texttt{BK}) can use 32-bit integers to reference vertices and edges. The value of "XL" indicates the \emph{extra-large} dataset. For this dataset, our implementations use 32-bit relative indices to reference vertices, whereas the other implementations use 64-bit integers.
\item[$n$,$m_i^+$,$m_i$] These columns quantify the number of vertices, internal edges (possibly with duplicates, as used by \texttt{BK}, \texttt{mBK}, and \texttt{mBK-r}), and internal edges without duplicates (used by our implementations), respectively. 
\item[{(s)}] This dataset consists of multiple subproblems.
\end{tablenotes}
\end{threeparttable}%
}
\end{table*}

The per-dataset solve and total time appears in Table~\ref{tab:running_time_real_world}. Each solve time along with the corresponding memory usage is averaged in \figurename~\ref{fig:combinedfig_bar} and plotted in \figurename~\ref{fig:combinedfig_rel}, both split into small and large datasets. 

We observe that \texttt{fcBK} and \texttt{cBK} use less memory than all existing approaches. We observe that they average $34\%$ of the memory of \texttt{BK} on small datasets and $17\%$ on large datasets, with a spread from $3.5\%$ to $51\%$. We also see that \texttt{mBK} is the closest to our implementations in terms of space, averaging $76\%$ of the memory of \texttt{BK} on small datasets and $38\%$ on large datasets (which is more than twice that of \texttt{fcBK} and \texttt{cBK}). The space reduction of our implementations come from the size reduction of vertices and edges combined with a large number of merged parallel internal edges. On average, merging reduces the number of internal edges by $39\%$. However, some datasets approach reductions in internal edges of $95\%$.

For solve time, we observe that \texttt{fcBK} and \texttt{cBK} are faster than \texttt{BK} on every dataset, but that \texttt{mBK-r} is the fastest on a few datasets. For total time, we observe that a few datasets have \texttt{BK}, \texttt{mBK}, or \texttt{mBK-r} with the fastest total time. However, on average \texttt{fcBK} and \texttt{cBK} are faster than existing approaches for both solve and total time. Specifically, our implementations average $41\%$ and $36\%$ of the solve time of \texttt{BK} for small datasets as well as $26\%$ and $24\%$ on large datasets, respectively. This is also a significant reduction compared to \texttt{mBK} and \texttt{mBK-r} that average $79\%$ and $57\%$ of the solve time of \texttt{BK} on small datasets as well as $68\%$ and $41\%$ on large datasets, respectively. The per-dataset performance plotted in \figurename~\ref{fig:combinedfig_rel} reveals that the reduction in solve time for \texttt{fcBK} and \texttt{cBK} is consistent across datasets, though more pronounced for large datasets. 

The single extra-large dataset has \texttt{BK} and \texttt{mBK} with a similar solve time of just above $1000$ seconds. On the other hand \texttt{mBK-r}, \texttt{fcBK}, and \texttt{cBK} use $53\%$, $38\%$, and $36\%$ of that time, respectively. The memory usage for this dataset is notable with \texttt{fcBK} and \texttt{cBK} using $20$~GB compared to \texttt{BK}, \texttt{mBK}, and \texttt{mBK-r}, which use $77$~GB, $49$~GB, and $51$~GB, respectively. The external queue to keep too-large values between active vertices does not have an impact on the memory for this dataset, as its largest observed size of this queue is 2.

We observe comparable solve times for \texttt{fcBK} and \texttt{cBK}. We do find \texttt{cBK} faster than \texttt{fcBK} on most datasets. The difference between the two is rarely greater than $10\%$, but consistently tips in the favor of \texttt{cBK}. The largest relative difference for the two implementations is for the \texttt{printed-graph16} dataset, where \texttt{cBK} requires just above half the solve time of \texttt{fcBK}.

\section{Graph Cuts on Large Volumes} \label{sec:largevolumes}

We create two new graphs from volumetric datasets and compute their minimum $s$-$t$ cuts, highlighting that our implementations can find minimum $s$-$t$ graph cut on larger graphs than previously possible using the same hardware. The graphs solve surface detection problems as described in \cite{li2006a}, starting from large 3D volumes from a micro-CT (micro‑computed tomography) scan. The graphs required to solve these problems are large enough that \texttt{fcBK} and \texttt{cBK} are the only implementations of BK with a memory footprint small enough to fit in our $128$~GB machine. When building the graph, we load the vertices and edges directly into the final graph structure, avoiding the need for a temporary graph structure. We do this by first finding the degree sequence of the graph and the corresponding vertex indices in the interleaved list. When adding edges to the graph from the data, we ensure that no parallel residual edges are present in the graph (even after adding the mirror of each residual edge). The graphs satisfy the assumptions of the compact representation, showing that these assumptions can be realized for very large datasets. The data and the corresponding graphs for the two datasets are described below.

\subsection{Bumblebee}

\subsubsection{Data and Preprocessing}
The bumblebee dataset is a micro-CT scan of the eye of a bumblebee with dimensions $467\times1055\times1631$. From this scan, we look for the surface of the eye. The raw data is visualized alongside the resulting graph-cut surface in \figurename~\ref{fig:bumblebee}. We process the data by applying a 3D Sobel kernel in the $z$-axis to get extreme values near potential surfaces.

\begin{figure*}[!t]
    \centering
    \subfloat[]{\parbox[b]{.48\textwidth}{
      \label{fig:bumblebee}
      \includegraphics[width=0.46\textwidth]{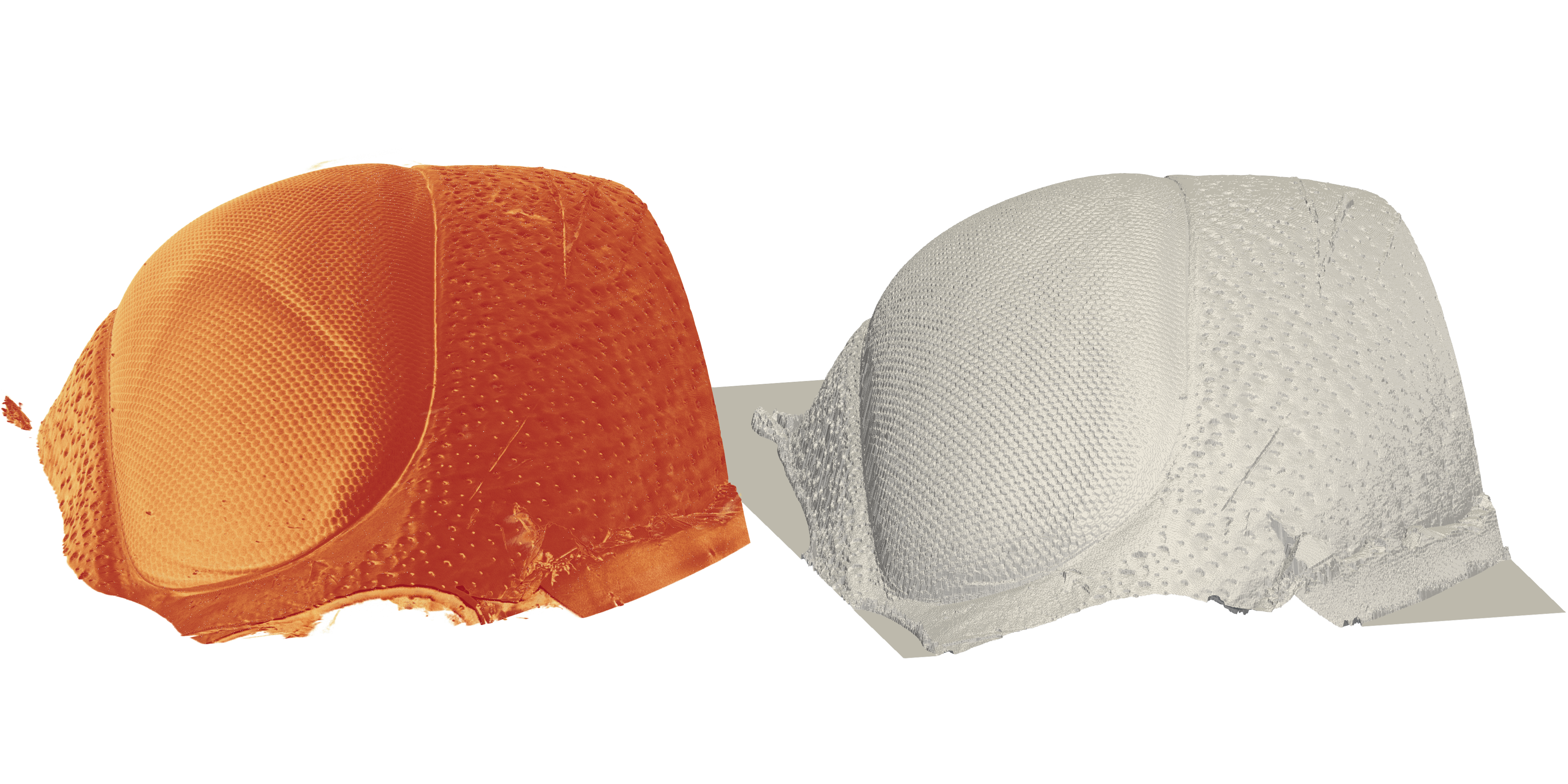}\\
      \includegraphics[width=0.46\textwidth]{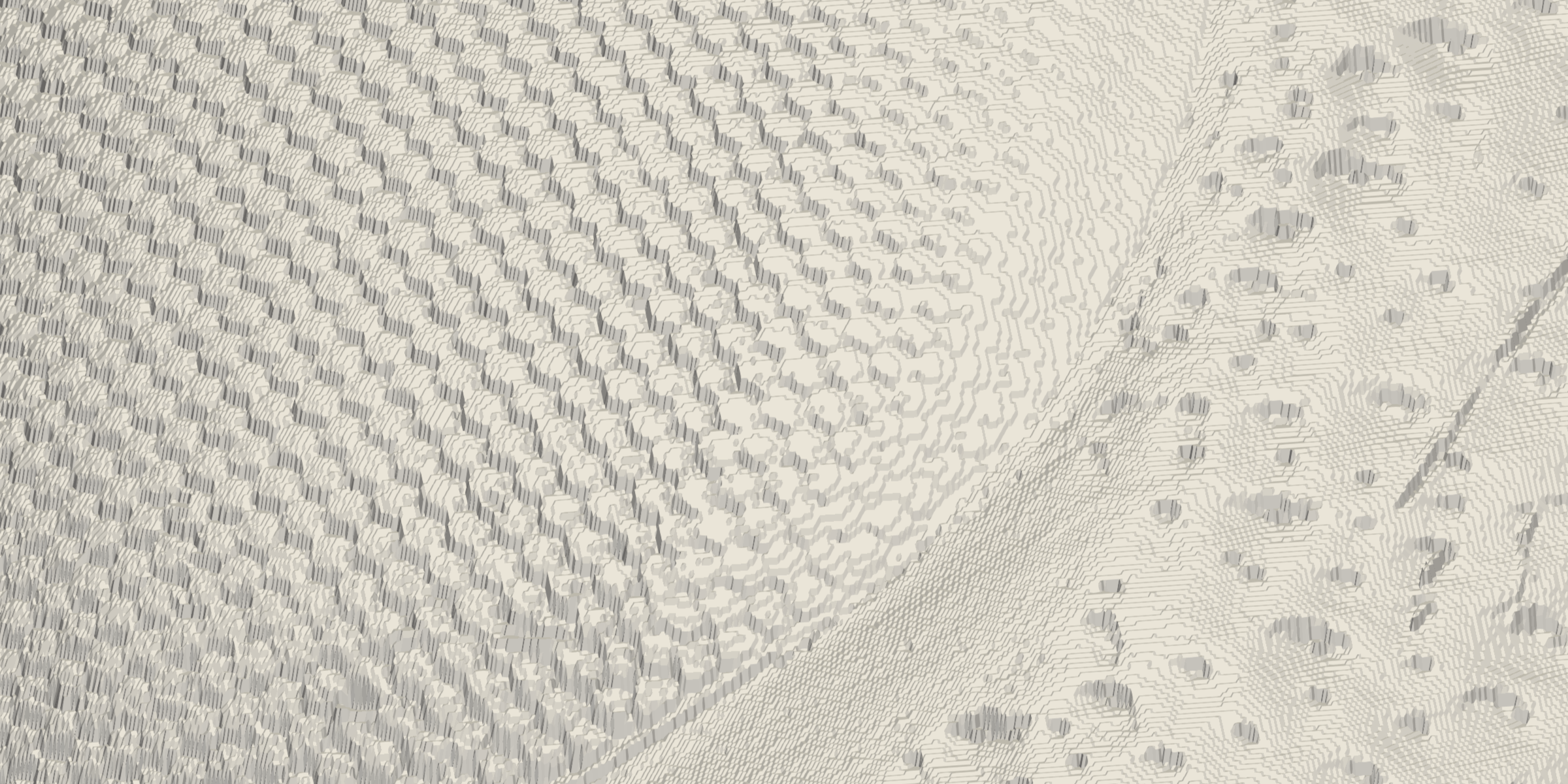}
    }}
    \hfil
    \subfloat[]{
      \label{fig:mousebone}
      \centering
      \includegraphics[width=0.46\textwidth]{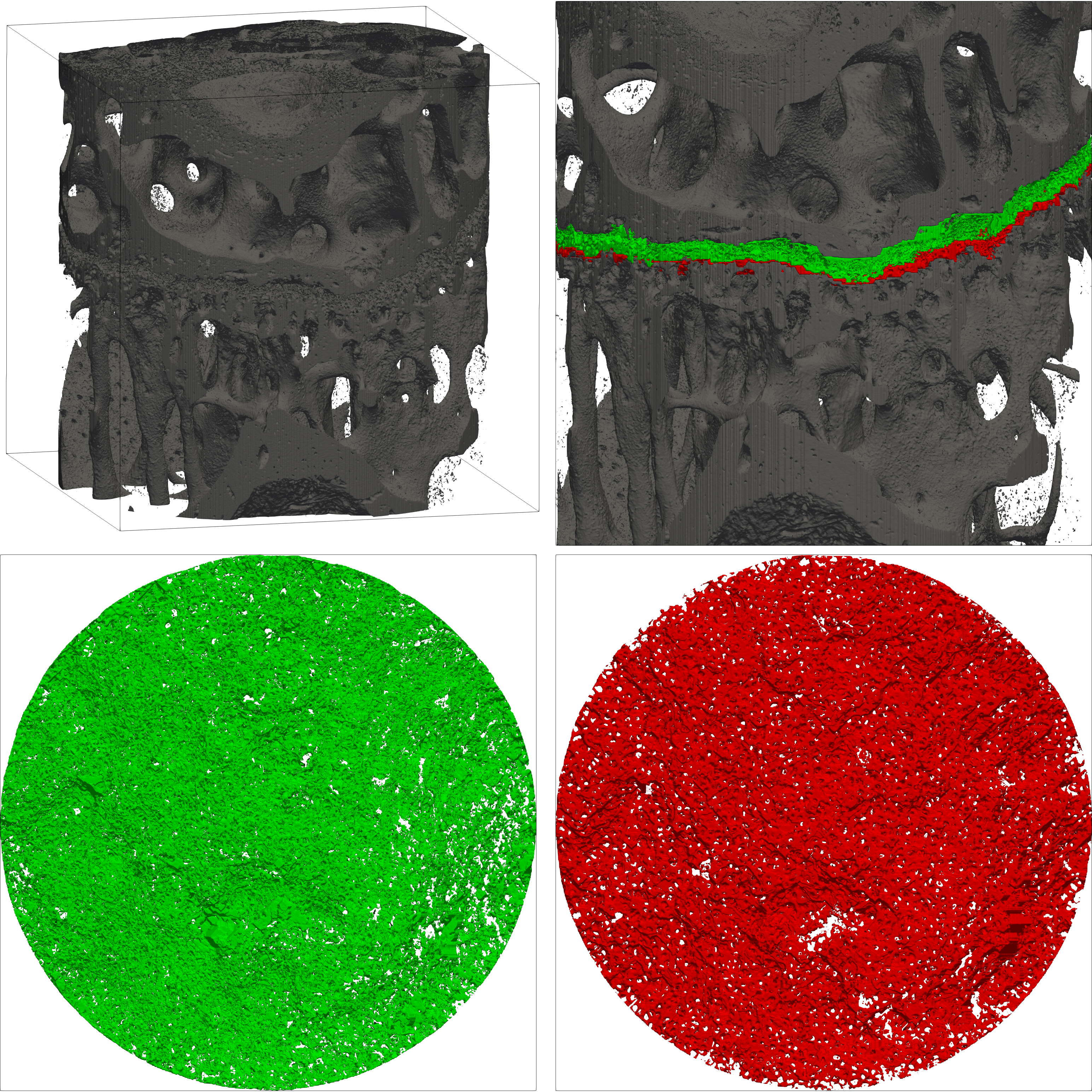}
    }
    \caption{Visualizations of single and multiple surface detection in large volumes. (a) Visualization of the eye of a bumblebee (top left). We use \texttt{fcBK} to find the minimum $s$-$t$ cut used to extract a surface (top right). (bottom) A close-up shows how the large graph created from the high-resolution data allows us to capture the fine details of the surface. (b) Visualization of the tibia of a mouse. We show (top left) the raw data along with (top right) the segmentation of the growth plane. (bottom) We extract the boundary surfaces of the red and green growth planes from a minimum $s$-$t$ cut that we find using \texttt{fcBK}.}
    \label{fig:largevolumesurface}
\end{figure*}

\subsubsection{The Corresponding Graph}
The graph built solves the problem of single surface detection. We create a graph with a vertex corresponding to each voxel in the processed volume, along with the terminals $s$ and $t$. All vertices corresponding to voxels in columns (along $z$) outside of where we expect to find a surface are connected to $t$ with weight $\infty$ (as large as possible). We also connect the top $z$-layer to $t$ with an edge of weight $\infty$. For all other vertices, we do the following: Starting with the Sobel-filtered values, we take the voxel-wise difference along the $z$-axis. If negative, the absolute value will correspond to the weight of an edge to $t$, while if positive, that value is used as the weight of an edge from $s$. We also add internal edges to get a single, smooth surface. We do this using a smoothness constraint of $\Delta=8$ between columns (matching notation in \cite{li2006a}) in addition to adding edges directed downwards between neighboring vertices within a column, all of weight $\infty$. 

The graph totals $804$~million vertices (including terminal edges) and $7.9$~billion internal edges, making it an extra-large dataset. The compact representation (as used by \texttt{fcBK} or \texttt{cBK}) is the only existing graph representation for the BK algorithm that can fit within the memory bounds of our machine, requiring $76$~GB for the graph. In comparison, \texttt{BK} would need $292$~GB to represent the same graph. Our machine requires $16$ minutes to create the graph from the data and $3$ minutes to compute the maximum $s$-$t$ flow.

\subsection{Mouse Bone}\label{sec:mousebone}

\subsubsection{Data and Preprocessing}
The Mouse Bone dataset is a micro-CT scan of the tibia of a mouse. From this scan, we are looking for two regions of the growth plane based on the texture. We use a section of the volume of size $330\times 972\times 972$, as it contains the areas of interest. The raw data is visualized alongside the resulting graph-cut segmentation in \figurename~\ref{fig:mousebone}. From the raw data, we use a neural network to predict the desired classes (can be considered as a closed box). We use the minimum $s$-$t$ cut as a post-processing step, which allow us to gain structure that the neural network is missing and incorporate the layered structure of the areas. From the neural network predictions, we take the voxel-wise difference along the $z$-axis to get extreme values near the boundaries of the regions. 

\subsubsection{The Corresponding Graph}
The graph built solves the problem of multiple surface detection for $3$ surfaces. We create a graph with $3$ subgraphs $G_i$ corresponding to the $3$ surfaces, each with a vertex for each voxel in the processed volume, along with the terminals $s$ and $t$. For the top layer of vertices in each subgraph, we connect them to $t$ with weight $\infty$. For the remaining vertices in the $3$ subgraphs, we start with the processed voxels corresponding to the surface of that subgraph. We then take the voxel-wise difference along the $z$-axis. If negative, the absolute value will correspond to the weight of an edge to $t$, while if positive, that value is used as the weight of an edge from $s$. We add edges of weight $\infty$ inside each subgraph (both inside and between columns) to get $3$ smooth surfaces using the smoothness constraint $\Delta=5$  (matching notation in \cite{li2006a}). Between the subgraphs, we add inter-surface edges of weight $\infty$ to ensure a minimum and maximum margin between neighboring surfaces of $1$ and $50$, respectively. 

The graph totals $933$~million vertices (including terminal edges) and $11.1$~billion internal edges, making it an extra-large dataset. To ensure that the edges of the graph do not go between vertices with too large of a relative index, we label the vertices using the axis ordering of $x$, $z$, $i$, and then $y$, where $i$ corresponds to the subgraph $G_i$. In other words, we label along the $x$-axis first, then the $xz$ slice, followed by the two other $xz$ slices of $G_2,G_3$, and finally, go to the next $xz$-slice in $G_1$ in the $y$-axis. The representation of \texttt{fcBK} or \texttt{cBK} is the only graph representation for the BK algorithm that can fit within the memory bounds of our machine, requiring $103$~GB for the graph. In comparison, \texttt{BK} would need $399$~GB to represent the same graph. Our machine requires $42$ minutes to create the graph from the data and $39$ minutes to find the maximum $s$-$t$ flow.

\section{Discussion and Conclusion}\label{sec:discussion_conclusion}

The tighter running time bound we establish for the BK algorithm reduces a long-standing gap between the theoretical complexity of the algorithm and its empirical running time. Nevertheless, more work is needed to match the strong performance observed for many graphs from computer vision problems. Future work might involve identifying and utilizing parameters in graphs that further improve the worst case running time of the algorithm. We show that a constant upper bound on the weight of each non-terminal edge (similarly for terminal edges) is one such parameter that improve the BK algorithm running time to the polynomial worst case bound of $O(m^2n)$.

The memory savings achieved by our compact graph representation are substantial and directly translate into increased scalability. This enables processing graphs with up to $10^9$ vertices and $10^{10}$ edges on a machine with $128$~GB of memory. This representation is a starting point for further memory-optimized representations in specific use cases and for other minimum $s$-$t$ cut algorithms.

By establishing our implementation of BK as the fastest in practice, we show that the theoretical bottleneck of BK (addressed by fcBK) does not significantly influence the running time on graphs from computer vision problems. This additionally highlights the importance of memory-efficient designs for computer vision implementations of minimum $s$-$t$ cut algorithms, even when considering only the time. Our implementations are publicly available and can be plugged directly into existing and future solutions using minimum $s$-$t$ cuts within computer vision.

\ifthenelse{\useieeetemplate=1}{

\bibliographystyle{IEEEtran}
\bibliography{IEEEabrv,references}
\newpage
\begin{IEEEbiography}[{\includegraphics[width=1in,height=1.25in,clip,keepaspectratio]{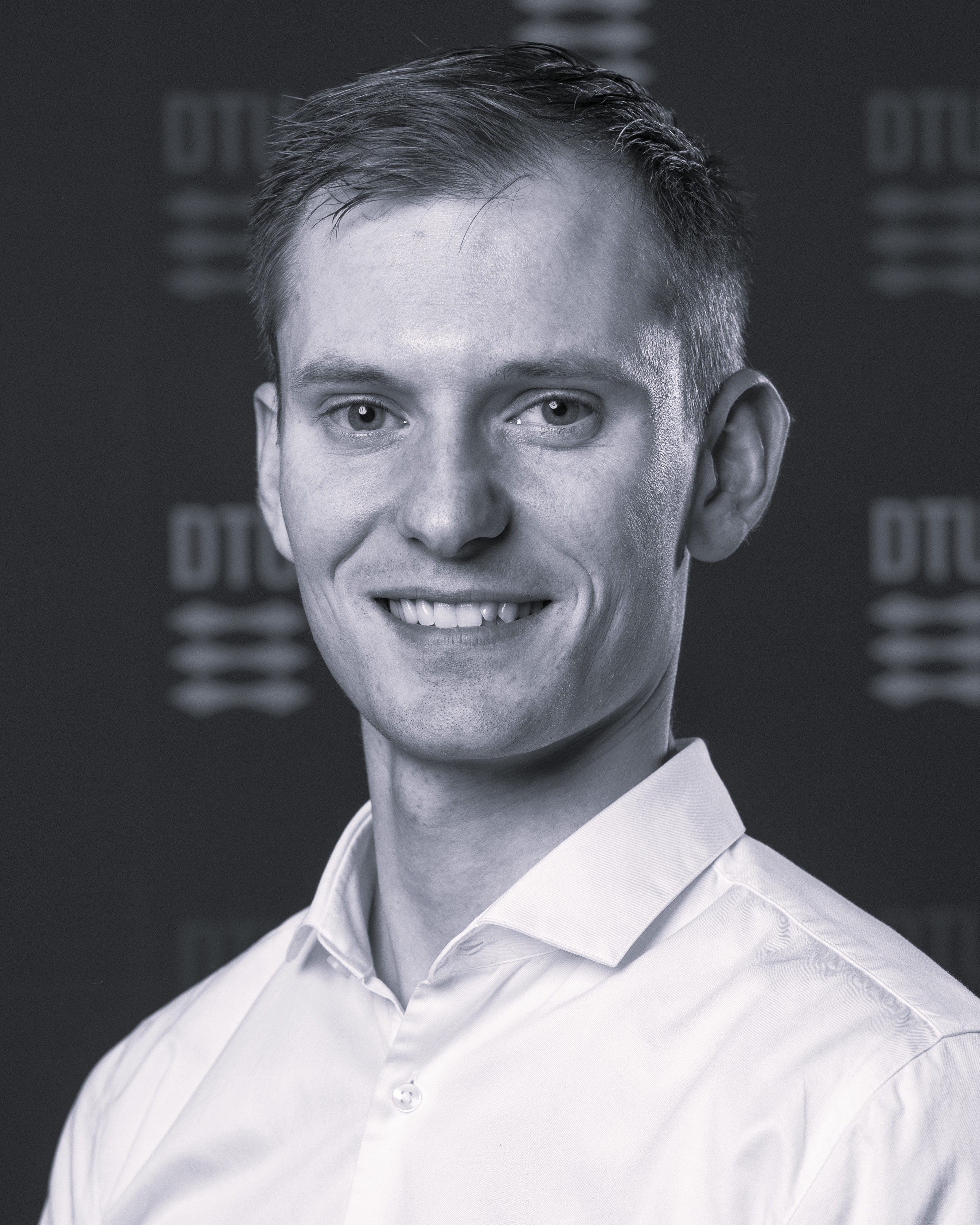}}]{Christian~M.~Mikkelstrup}
received the B.S.E. degree in applied mathematics from the Technical University of Denmark (DTU), Lyngby, Denmark, in 2022 and the M.S.E. degree in human-centered artificial intelligence from DTU in 2024.

From 2022 to 2024, he was a student assistant with TrackMan, where he was engaged in research and development within vision and AI in sports. He is currently a Ph.D. student at the Department of Applied Mathematics and Computer Science, DTU. His research has been concerned with computer vision, deep learning, and graph algorithms. 

Mr. Mikkelstrup is a Ph.D. fellow with the Danish Data Science Academy. For more information, see \url{https://christian.mikkelstrup.info/}.
\end{IEEEbiography}

\vspace{11pt}
\begin{IEEEbiography}[{\includegraphics[width=1in,height=1.25in,clip,keepaspectratio]{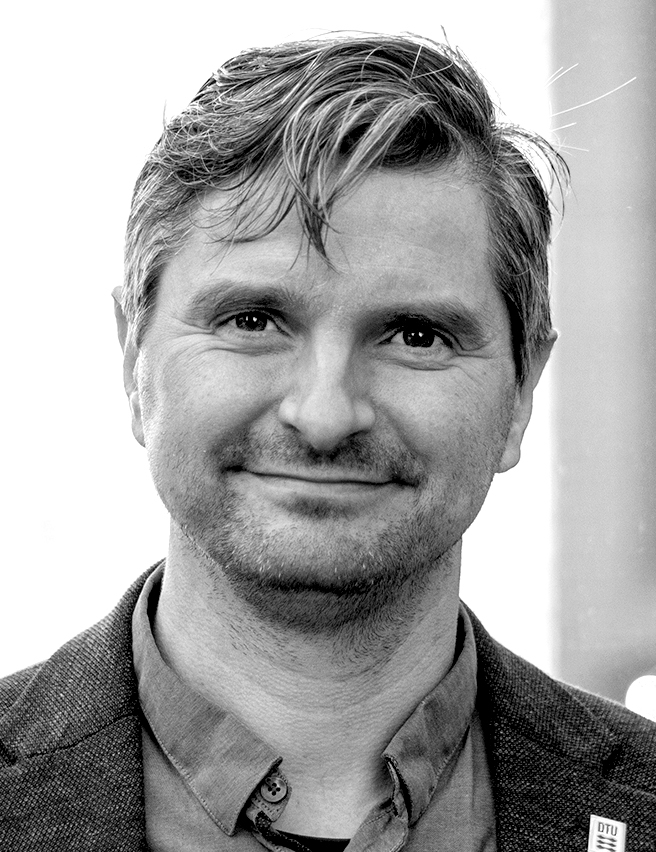}}]{Anders~B.~Dahl}
(Member, IEEE) received the Ph.D. degree from the Technical University of Denmark (DTU), Lyngby, Denmark, in 2009. 

He is currently a professor and heads the Section for Visual Computing at the Department of Applied Mathematics and Computer Science, DTU. His research focuses on machine learning for quantitative analysis of 3D volumetric images.

For more information, see \url{https://orbit.dtu.dk/en/persons/anders-bjorholm-dahl/}.
\end{IEEEbiography}

\vspace{11pt}
\begin{IEEEbiography}[{\includegraphics[width=1in,height=1.25in,clip,keepaspectratio]{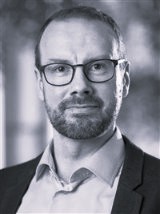}}]{Philip~Bille} received the Ph.D. degree from the IT University of Copenhagen, Copenhagen, Denmark, in 2007. 

He is currently a professor and the head of the Algorithms, Logic, and Graph section at the Department of Applied Mathematics and Computer Science, Technical University of Denmark (DTU), Lyngby, Denmark. His research focuses on searching, indexing, compressing, and manipulating massive data set. 

Prof. Bille is a member of the Association for Computing Machinery (ACM) and the European Association for Theoretical Computer Science (EATCS). For more information, see \url{https://people.compute.dtu.dk/phbi/}.
\end{IEEEbiography}

\vspace{11pt}
\begin{IEEEbiography}[{\includegraphics[width=1in,height=1.25in,clip,keepaspectratio]{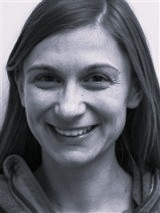}}]{Vedrana~A.~Dahl}
(Member, IEEE) received a degree in mathematics from the University of Zagreb, Zagreb, Croatia, followed by the M.Sc. degree in multimedia technology from the IT University of Copenhagen, Copenhagen, Denmark, and the Ph.D. degree in geometry processing from the Technical University of Denmark (DTU), Lyngby, Denmark. 

She is currently a Professor at the Department of Applied Mathematics and Computer Science, DTU. Her research focuses on geometric methods for the analysis of volumetric data, including tomographic reconstruction, volumetric segmentation, deformable meshes, and neural representations. 

For more information, see \url{https://people.compute.dtu.dk/vand/}.
\end{IEEEbiography}

\vspace{11pt}
\begin{IEEEbiography}[{\includegraphics[width=1in,height=1.25in,clip,keepaspectratio]{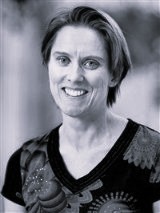}}]{Inge~Li~Gørtz}
 received the Ph.D. degree from the IT University of Copenhagen, Copenhagen, Denmark, in 2005. 

She is currently a professor at the Department of Applied Mathematics and Computer Science, Technical University of Denmark (DTU), Lyngby, Denmark. Her research has been concerned with the design and analysis of data structures and algorithms, especially, pattern matching, data compression, compressed computation, and approximation algorithms. She is part of the editorial board of Information and Computation.

Prof. Gørtz is a member of Association for Computing Machinery (ACM) and European Association for Theoretical Computer Science (EATCS), and she is Vice President of the EACTS Assembly. For more information, see \url{https://people.compute.dtu.dk/inge/}.
\end{IEEEbiography}
}{
\section*{Acknowledgment}
This work was supported by Danish Data Science Academy, which is funded by the Novo Nordisk Foundation (NNF21SA0069429) and VILLUM FONDEN (40516).

\bibliographystyle{abbrv}
\bibliography{references}
}


\end{document}